\documentclass{article}
\usepackage[T1]{fontenc}
\usepackage{iclr2027_conference,times}

\usepackage{amsmath,amssymb,mathtools,amsthm}
\usepackage{booktabs,multirow,tabularx}
\usepackage{siunitx}
\usepackage{graphicx}
\usepackage{float}
\usepackage{placeins}
\usepackage{xcolor}
\usepackage{colortbl}
\usepackage{tikz}
\usetikzlibrary{arrows.meta,positioning,fit,backgrounds}
\usepackage{hyperref}
\definecolor{referenceblue}{HTML}{2B608A}
\hypersetup{colorlinks=true,
  citecolor=referenceblue,linkcolor=referenceblue,urlcolor=referenceblue,
  pdftitle={HotPOT: Operator Transformer with Heterogeneous Physics Experts for Large-Scale PDE Pre-training},
  pdfauthor={Jiayi Li, Guancheng Zhou, Yisi Luo, Ruihua Chen, Penghao Jiang, Deyu Meng, Flora D. Salim, Piotr Koniusz},pdfsubject={},pdfkeywords={}}
\usepackage{xurl}
\usepackage[nameinlink,capitalise]{cleveref}

\newcommand{\R}{\mathbb{R}}
\newcommand{\Hist}{\mathbf{H}}
\newcommand{\Data}{\mathcal{D}}
\newcommand{\FeatureMap}[2]{f_{\mathrm{feat},#1,#2}}
\newcommand{\NumericalMap}[1]{f_{\mathrm{upd},#1}}
\newcommand{\DataConfigs}{\mathcal{R}}
\newcommand{\SystemTypes}{\mathcal{T}}
\newcommand{\UpdateSpace}{\mathcal{U}}
\newcommand{\Loss}{\ell_{\mathrm{train}}(\mathbf{w})}
\newcommand{\Shared}{f_{\mathrm{enc},\boldsymbol{\theta}}}
\newcommand{\Mixer}[1]{f_{\mathrm{mix},\boldsymbol{\phi}_{#1}}}
\newcommand{\Decoder}[1]{f_{\mathrm{dec},\boldsymbol{\psi}_{#1}}}
\newcommand{\state}{\mathbf{q}}
\newcommand{\update}{\mathbf{u}}
\newcommand{\latent}{\mathbf{Z}}
\newcommand{\context}{\mathbf{Z}^{\mathrm{mix}}}
\newcommand{\systemtype}{\tau}

\DeclareMathOperator{\concat}{concat}
\DeclareMathOperator{\stackrows}{stack}
\DeclareMathOperator{\diag}{diag}

\definecolor{sharedblue}{HTML}{4477AA}
\definecolor{privateorange}{HTML}{EE7733}
\definecolor{programgreen}{HTML}{228833}
\definecolor{chartgray}{HTML}{667788}
\definecolor{bestblue}{HTML}{1769AA}

\newtheorem{proposition}{Proposition}

\title{HotPOT: Operator Transformer with\\Heterogeneous Physics Experts for\\Large-Scale PDE Pre-training}

\author{\begin{minipage}{\dimexpr\textwidth-2\tabcolsep\relax}\centering\normalfont
Jiayi Li\textsuperscript{1,*}, Guancheng Zhou\textsuperscript{2,*}, Yisi Luo\textsuperscript{2}, Ruihua Chen\textsuperscript{2},\\
Penghao Jiang\textsuperscript{1}, Deyu Meng\textsuperscript{2,$\dagger$}, Flora D. Salim\textsuperscript{1,$\dagger$}, Piotr Koniusz\textsuperscript{1,3,$\dagger$}\\[0.6em]
{\small \textsuperscript{1}University of New South Wales\\
\textsuperscript{2}Xi'an Jiaotong University\\
\textsuperscript{3}Commonwealth Scientific and Industrial Research Organisation}
\end{minipage}}

\newcommand{\edit}[1]{#1}

\iclrfinalcopy
\begin{document}
\maketitle
\lhead{}
\renewcommand{\headrulewidth}{0pt}
\begingroup
\renewcommand{\thefootnote}{\fnsymbol{footnote}}
\interfootnotelinepenalty=10000
\footnotetext[1]{Equal contribution.}
\footnotetext[2]{\raggedright Corresponding authors: Deyu Meng (\href{mailto:dymeng@mail.xjtu.edu.cn}{dymeng@mail.xjtu.edu.cn}), Flora D. Salim (\href{mailto:flora.salim@unsw.edu.au}{flora.salim@unsw.edu.au}), and Piotr Koniusz (\href{mailto:piotr.koniusz@unsw.edu.au}{piotr.koniusz@unsw.edu.au}).}
\endgroup
\raggedbottom

\begin{abstract}
Large-scale PDE pre-training trains a single model on the data from multiple partial differential equations (PDEs). However, adding heterogeneous PDEs with different governing equations, physical variables, and variable interactions does not always improve predictive performance across tasks. Existing PDE mixture-of-experts models mitigate cross-task interference through learned routing, but typically use homogeneous expert architectures, making it difficult to directly reflect differences in state evolution across equations.
We introduce HotPOT, an operator transformer with heterogeneous physics experts for large-scale PDE pre-training. HotPOT uses a shared encoder to capture information common across PDEs, while each PDE family uses a neural--numerical expert that pairs a learnable predictor with a fixed numerical update containing no trainable parameters. Each predictor outputs the quantities required by its paired numerical update, which then computes the next state. This design allows different PDEs to share a common encoder while retaining their own learnable predictors and numerical updates. 
Experiments across five PDE families and six settings show that HotPOT outperforms MoE-POT in all 24 matched-scale comparisons before and after fine-tuning, reducing rollout error by 50.7\% on average. Notably, the 17M model also surpasses substantially larger MoE-POT models, and the performance gains extend to zero-shot prediction under unseen physical conditions.
Code is available at \url{https://github.com/cruiseresearchgroup/HotPOT-code}.

\end{abstract}

\section{Introduction}
\label{sec:introduction}

\edit{Partial differential equations (PDEs) are fundamental to modeling a wide range of physical systems. The high cost of repeated numerical simulations has motivated learning-based PDE solvers, particularly neural operators \citep{li2021fno,hmida2026compno} that learn solution mappings directly from data. 
Recent methods have shown that a single model can be trained jointly on data from multiple PDEs, rather than trained separately for each task, and have achieved strong performance across multiple PDE tasks.}

\edit{However, adding data from heterogeneous PDEs with different governing equations, physical variables, and variable interactions does not always improve performance on existing tasks. For example, the MoE-POT model proposed in \citet{wang2025moepot} reports increased error on some original tasks when expanding the training mixture of a dense pre-trained model from six to twelve datasets. To mitigate cross-task interference, MoE-POT uses learned routing to select specialized experts. These experts nevertheless use homogeneous architectures, which may be difficult to directly learn inherent physical distinctions in state evolution across equations.}

On the other hand, Neural--numerical methods offer a basis for constructing heterogeneous experts by combining learned predictions with explicit numerical operations \citep{karlbauer2022finn,gopakumar2026opssplit}. Different PDEs can require different update operations and prediction quantities. An expert may predict a rate for time integration, an increment for a constrained update, or state components for direct assignment. Pairing a predictor with the corresponding update allows the expert's computation to reflect the requirements of its PDE. This motivates a neural--numerical pre-training paradigm in which each PDE pairs learnable predictions with explicit numerical operations according to its update requirements, while shared history representations integrate the resulting experts into a unified pre-training model.

Specifically, we introduce HotPOT, an operator transformer that combines a shared encoder with heterogeneous neural--numerical experts for large-scale PDE pre-training. \Cref{fig:overview}(a) summarizes the workflow from heterogeneous PDE pre-training to prediction and transfer. The shared encoder processes scalar-field histories across PDEs, while each expert handles system-specific variable coupling and state evolution. Each expert pairs a learnable predictor with a PDE-specific fixed numerical update, which contains no trainable parameters. The predictor outputs the quantities required by its paired numerical update, which then computes the next state. The shared encoder and expert predictors are jointly trained using next-state supervision, bringing the different neural--numerical experts together in a single pre-training model.
The Navier--Stokes (NS) example in \cref{fig:overview}(b) illustrates the resulting prediction: at the tenth forecast step, HotPOT follows the reference vorticity contours more closely than MoE-POT in the displayed central region.

The main contributions are:
\begin{itemize}
  \item We propose HotPOT, an operator transformer that leverages PDE-specific experts containing learnable predictors with fixed numerical updates, allowing different equations to use different prediction quantities and update operations.
  \item We integrate these heterogeneous experts through a shared history encoder into one multi-PDE model. The encoder and predictors are trained end-to-end using only next-state supervision, while variable coupling and state evolution remain system-specific.
  \item We evaluate HotPOT across five PDE families and six settings. Relative to MoE-POT, HotPOT reduces rollout error by 50.7\% on average across 24 matched-scale comparisons before and after fine-tuning, and further improves zero-shot prediction under unseen physical conditions.
\end{itemize}

{\setlength{\intextsep}{6pt}\setlength{\abovecaptionskip}{6pt}
\begin{figure}[H]
\centering
\includegraphics[width=\textwidth]{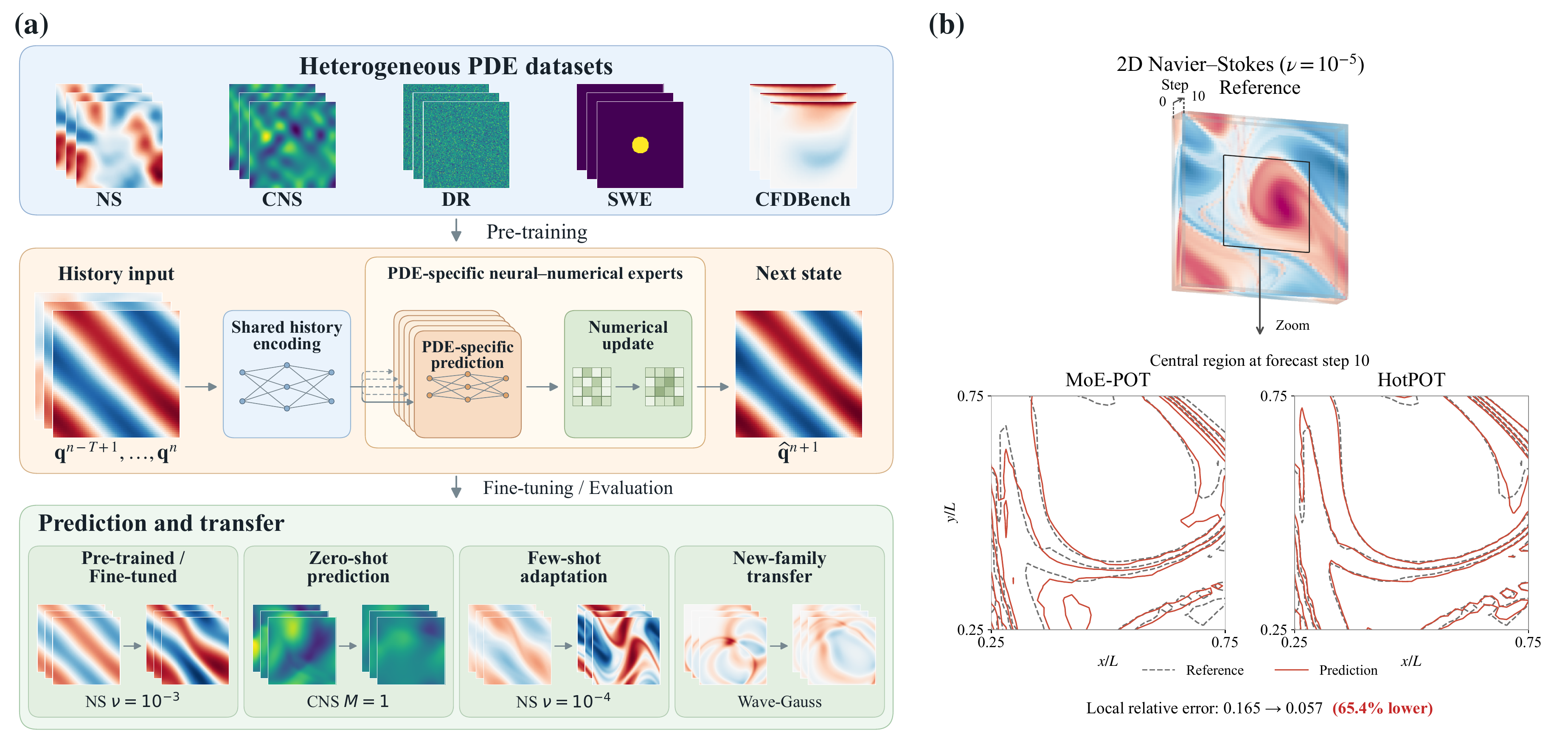}
\caption{\textbf{HotPOT: framework overview and an NS prediction example.}
(a) Shared history encoding connects PDE-specific predictor--update pairs for
pre-training and transfer. The state is $\state^n$ at step $n$, with history length $T=10$.
(b) Pre-trained 17M HotPOT and MoE-POT on the same NS test
trajectory ($\nu=10^{-5}$). The boxed region is compared at forecast step~10;
gray dashed and red solid curves denote reference and
predicted contours, respectively. The relative $L_2$ error in this region
decreases from 0.165 for MoE-POT to 0.057 for HotPOT (65.4\% lower).
Here $\nu$ is viscosity, $M$ is Mach number, and $L$ is domain length.}
\label{fig:overview}
\end{figure}
}

\section{Related Work}
\label{sec:related_work}

\paragraph{Multi-PDE pre-training.}
Previous studies including MPP \citep{mccabe2024mpp}, DPOT \citep{hao2024dpot}, Poseidon \citep{herde2024poseidon}, and Walrus \citep{mccabe2026walrus} learn reusable representations through
joint training across PDE systems.
Some related approaches propose to better organize representations around physical variables or
scalar fields, as in CoDA-NO \citep{rahman2024codano} and Tadpole \citep{liu2026tadpole}, or condition prediction on equation
and system information, as in PDEformer \citep{ye2024pdeformer}, PROSE-PDE \citep{sun2024prosepde}, and Unisolver \citep{zhou2025unisolver}.
These studies provide the foundation for learning across heterogeneous
PDE data. We focus on incorporating PDE-specific prediction and update
operations into this shared pre-training framework.

\paragraph{Mixture-of-experts for PDEs.}
MoE-POT \citep{wang2025moepot} combines shared experts with input-dependent routed experts,
while NESTOR \citep{sun2026nestor} routes computation at both image and token levels. These methods allocate specialized
neural capacity to different inputs through learned routing, while
retaining a common expert structure. HotPOT extends expert
specialization to the quantities predicted and the operations used to
evolve the state. Each PDE family uses a specific neural predictor paired with
its corresponding fixed numerical update. A shared history encoder
connects these heterogeneous experts within one jointly trained model.

\paragraph{Neural--numerical methods.}
Neural--numerical methods place learned components within structured
updates. Universal differential equations \citep{rackauckas2021ude} and APHYNITY \citep{yin2021aphynity} augment known
dynamics, while learned discretizations and solver-in-the-loop methods
learn numerical operators or corrections
\citep{barsinai2019datadriven,
kochkov2021mlcfd,um2020solver}.
FINN \citep{karlbauer2022finn} embeds trainable components in PDE-specific finite-volume structures,
and OpsSplit \citep{gopakumar2026opssplit} combines learned nonlinear operators with fixed
finite-difference operators and time integration.
CompNO \citep{hmida2026compno} combines pretrained operators with boundary enforcement,
and ComPhy \citep{trenta2026comphy} couples equation-specific modules.
Origo \citep{sun2026origo} decomposes evolution into global spectral and local constitutive
operators, while \citet{serrano2026splitting} compose learned operators at test time.
DISCO \citep{morel2025disco} generates a trajectory-conditioned operator for time integration.
Building on this line of work, HotPOT organizes PDE-specific
predictor--update pairs as heterogeneous experts for multi-PDE pre-training.
Their prediction quantities and update operations differ across PDEs,
while the shared encoder and expert predictors are jointly trained using
next-state supervision.

\section{Problem Setting}
\label{sec:problem}

\edit{We consider joint pre-training over a collection of time-dependent PDE systems. Let \(\mathcal R\) denote the set of prediction settings, and let \(r\in\mathcal R\) index one setting. A setting specifies a physical system together with its benchmark configuration, such as the observed variables, equation parameters, geometry, or boundary conditions. Different settings may therefore have different state variables, variable couplings, and state-transition mechanisms.}

\edit{Each setting $r$ has a trajectory dataset $\Data_r$ with $C_r$
observed physical fields. Flattening the spatial grid in a fixed order gives
$\state_{r,c}^n\in\R^{N_r}$ for field $c$ at saved step $n$, where $N_r$
is the number of grid cells and $q_{r,c,i}^n$ is its scalar value at cell $i$.
The operator $\concat$ concatenates its entries in the specified order. The full state is the vector
$\state_r^n=\concat(\{\state_{r,c}^n\}_{c=1}^{C_r})\in\R^{N_r C_r}$.
Geometry and support channels describe the spatial domain and valid grid
locations. They are provided as auxiliary inputs rather than physical fields
to be predicted, so they are excluded from $C_r$ and the state vector
$\state_r^n$. The pre-training corpus is
$\Data=\bigcup_{r\in\DataConfigs}\Data_r$.}

\edit{Let $g_{\mathbf w}$ denote the next-state predictor with trainable parameter
vector $\mathbf w$. The operator $\stackrows$ stacks its indexed row vectors
in order. Let $\widehat{\state}_r^{n+1}$ denote the predicted next state.
The most recent $T=10$ saved states form the history matrix $\Hist_r^n$:
\begin{equation}
 \Hist_r^n=\stackrows\!\left(\{(\state_r^{n-T+j})^\top\}_{j=1}^{T}\right),
 \qquad
 \widehat{\state}_r^{n+1}=g_{\mathbf w}(\Hist_r^n;r).
 \label{eq:target_transition}
\end{equation}
Thus $\Hist_r^n\in\R^{T\times N_r C_r}$ contains states in chronological
order. The matrix $\Hist_{r,c}^n\in\R^{T\times N_r}$ is its field-$c$ block.
The argument $r$ identifies the PDE setting. The predictor also uses the
geometry and support information, which we omit from the arguments of
$g_{\mathbf w}$ to keep the notation compact. Each prediction is appended
to the history for autoregressive rollout. We use bold lowercase symbols
for vectors, bold uppercase symbols for matrices, and calligraphic uppercase
symbols for sets or spaces. Indexed braces gather entries in index order.
Parentheses group arguments or pairs.}

\section{Method}
\label{sec:method}

HotPOT builds its shared encoder and neural predictors using patch encoding,
temporal aggregation, and grouped Fourier/local blocks from DPOT and
MoE-POT \citep{hao2024dpot,wang2025moepot}.
As shown in \cref{fig:architecture}, HotPOT realizes $g_{\mathbf w}$ by
connecting PDE-family predictor--update pairs through a shared history encoder.

Let $\systemtype:\DataConfigs\to\SystemTypes$ map settings to the set of expert families. The family $\systemtype(r)$ is
assigned to setting $r$. Settings from the same PDE family use the
corresponding predictor and numerical update program. In particular, the
two NS viscosity settings share one NS expert family.
The shared encoder has parameter vector $\boldsymbol\theta$. For family
$e\in\SystemTypes$, $\boldsymbol\phi_e$ parameterizes field coupling and
evolution, and $\boldsymbol\psi_e$ parameterizes decoding and readout.
Their aggregation is $\mathbf w=\concat(\boldsymbol\theta,
\{(\boldsymbol\phi_e,\boldsymbol\psi_e)\}_{e\in\SystemTypes})$,
where parameter pairs are flattened and concatenated in a fixed family order.

{\setlength{\intextsep}{6pt}\setlength{\abovecaptionskip}{6pt}
\begin{figure}[!htb]
\centering
\input{figures/architecture_panels}
\caption{\textbf{Architecture of HotPOT.}
\textbf{(a)} Prediction pipeline with sampling, padding, and training-time noise.
\textbf{(b)} A shared encoder connects PDE-specific neural--numerical experts
in one model.
\textbf{(c)} Temporal aggregation and five alternative numerical updates.
Blue, orange, green, and gray indicate shared computation, system-specific
prediction, fixed numerical updates, and physical states, respectively.
Stage-state queries, history statistics, and repeated readouts are omitted.
FFN denotes feed-forward network. Field images are illustrative.
Notation and module details are given in
Appendices~\ref{app:backbone} and~\ref{app:expert_view}.}
\label{fig:architecture}
\end{figure}
}

\subsection{Shared Field-Wise History Representation}

The deterministic function $\FeatureMap{r}{c}$ normalizes field $c$'s
history and constructs six input features: normalized field value, two
coordinates, boundary distance, geometry support, and field validity.
This mapping does not mix physical fields. Each scalar history then passes
through the same encoder $\Shared$, with no learned cross-field mixing.
At saved time $n$, let $\latent_{r,c}\in\R^{P\times d_{\mathrm{enc}}}$
denote its representation, where $P$ is the number of spatial patches and
$d_{\mathrm{enc}}$ the shared width. Then
\begin{equation}
  \latent_{r,c}
  =\Shared\!\left(\FeatureMap{r}{c}(\Hist_{r,c}^n)\right),
  \qquad c=1,\ldots,C_r.
  \label{eq:shared_analysis}
\end{equation}
A shared patch encoder and positional embedding
produce tokens for each history position. Temporal aggregation then combines
these tokens within each field. Shared Fourier/local and channel feed-forward
network (FFN) blocks produce this feature matrix. The saved-time index $n$
is suppressed.
Next-state supervision jointly trains these features across datasets.
Appendix~\ref{app:backbone} gives the aggregation formula, dimensions, and
block equations.

\subsection{System-Specific Prediction}
\label{sec:system_prediction}

Each PDE family's predictor supplies the quantities required by its numerical
update: transport rates for NS, conservative-variable actions for CNS, height
increments for SWE, concentration-increment proposals for DR, or next-state
velocities for CFDB. We denote this prediction space by $\UpdateSpace_r$.

\paragraph{Field coupling and evolution.}
Let $\Mixer{\systemtype(r)}$ denote field coupling followed by system-specific
evolution. For field $c$, let $\mu_{r,c}$ and $\sigma_{r,c}$ be the history mean
and floored standard deviation over space and the $T$ input frames, and collect
them as $\mathbf s_{r,c}=(\mu_{r,c},\log\sigma_{r,c})^\top\in\R^2$.
The coupled context $\context_r\in\R^{P\times d_{\mathrm{priv}}}$, with private
width $d_{\mathrm{priv}}$, is
\begin{equation}
  \context_r=\Mixer{\systemtype(r)}
  \!\left(\{(\latent_{r,c},\mathbf s_{r,c})\}_{c=1}^{C_r}\right).
  \label{eq:private_mixer}
\end{equation}
Coupling concatenates field representations, applies a $1\times1$ projection,
and adds history-statistic and positional embeddings. System-specific
Fourier/local/FFN blocks then produce the context.

\paragraph{Spatial decoding and prediction.}
The decoder $\Decoder{\systemtype(r)}$ expands $\context_r$ to full resolution
using a transposed convolution, Gaussian error linear unit (GELU) activation,
and pointwise projection. Its readout combines this cached context with a
query state $\boldsymbol\xi_r^{n,k}\in\R^{d_{\xi,r}}$, representing the current
or an intermediate state. Here $k=1,\ldots,K_r$ indexes queries within the
saved interval $\Delta t_r$, the time between consecutive saved states.
Let $\widehat{\update}_r^{n,k}\in\UpdateSpace_r\subseteq\R^{d_{u,r}}$ denote
the neural prediction at query $k$; $d_{\xi,r}$ and $d_{u,r}$ count the spatial
and channel entries of the flattened query and prediction, respectively.
Then
\begin{equation}
  \widehat{\update}_r^{n,k}
  =\Decoder{\systemtype(r)}(\context_r,\boldsymbol\xi_r^{n,k}).
  \label{eq:finite_step_update}
\end{equation}
The same symbol $\widehat{\update}_r^{n,k}$ represents the different physical
quantities listed above. The following section specifies their interpretation,
reshaping, and conversion into the next state.

\subsection{System-Specific Numerical Execution}
\label{sec:numerical_updates}

Let $\NumericalMap{\systemtype(r)}^{\Delta t_r}$ be the PDE family's fixed
numerical update over $\Delta t_r$. It combines the current state $\state_r^n$
with the neural predictions from \cref{eq:finite_step_update} to produce the
next-state estimate $\widehat{\state}_r^{n+1}$:
\begin{equation}
  \widehat{\state}_r^{n+1}
  =\NumericalMap{\systemtype(r)}^{\Delta t_r}
  \!\left(\state_r^n,\{\widehat{\update}_r^{n,k}\}_{k=1}^{K_r}\right).
  \label{eq:execution}
\end{equation}
The program has no trainable parameters. For multi-stage updates, it constructs
successive query states from earlier predictions and reuses the history context
and readout weights. Fixed input/output mappings connect its grid to the observed
state layout; the PDE's physical coefficients are implicit in its arguments.
Below we give the core updates, omitting setting index $r$. Let $N$ be the
program's grid-cell count, $\mathbf1_N$ the all-ones vector, and
$\langle\cdot\rangle$ spatial mean. A superscript $+$ denotes a program's
updated state. Operator details and update properties are in
Appendices~\ref{app:expert_view}--\ref{app:theory}.

\paragraph{Incompressible Navier--Stokes (NS).}
Here the neural output $\widehat{\update}^{n,k}$ is an effective vorticity
transport rate. Let $\boldsymbol\omega\in\R^N$ be the current substep vorticity,
$\mathbf f\in\R^N$ the forcing, $\nu$ the viscosity, and $\delta t$ the substep
duration. Write $f_{\mathrm{diff},\nu}^{s}$ for numerical diffusion over duration
$s$. Let $\boldsymbol\omega_a$ be the diffused state and
$\boldsymbol\omega_m$ the midpoint state; $\widehat{\mathbf r}_1$ and
$\widehat{\mathbf r}_2$ are the corresponding rate outputs of
\cref{eq:finite_step_update} at these two states.
The updated vorticity $\boldsymbol\omega^+$ is computed in this order:
\begin{equation}
\begin{aligned}
 \boldsymbol\omega_a&=f_{\mathrm{diff},\nu}^{\delta t/2}(\boldsymbol\omega),
 &\boldsymbol\omega_m&=\boldsymbol\omega_a+\tfrac{\delta t}{2}
        (\widehat{\mathbf r}_1+\mathbf f),\\
 \boldsymbol\omega^+&=f_{\mathrm{diff},\nu}^{\delta t/2}
       \!\left(\boldsymbol\omega_a+\delta t(\widehat{\mathbf r}_2+\mathbf f)\right).
\end{aligned}
\label{eq:ns_core_update}
\end{equation}
Composing these substeps yields the next saved vorticity state. Transport rates
are learned; diffusion and midpoint integration are fixed operations.

\paragraph{Compressible Navier--Stokes (CNS).}
Reshape the neural output $\widehat{\update}^{n,1}$ as a normalized action matrix
$\mathbf A\in\R^{N\times4}$ for density, two momentum components, and total
energy. Let $\mathbf Q\in\R^{N\times4}$ be the current conservative state,
$\bar{\mathbf a}\in\R^4$ the action's channel means, and
$f_{\mathrm{scale}}(\mathbf Q)\in\R^4$ its conservative-field channel scales;
$\diag$ forms a diagonal matrix. Denote the increment by $\Delta\mathbf Q$,
the candidate by $\mathbf Q^\star$, and the updated state by $\mathbf Q^+$.
A global coefficient $\alpha_{\mathrm{CNS}}\in[0,1]$ enforces positive density
and pressure:
\begin{equation}
\begin{aligned}
 \Delta\mathbf Q&=(\mathbf A-\mathbf1_N\bar{\mathbf a}^{\top})
       \diag(f_{\mathrm{scale}}(\mathbf Q)),\qquad
 \mathbf Q^\star=\mathbf Q+\Delta\mathbf Q,\\
 \mathbf Q^+&=\mathbf Q+\alpha_{\mathrm{CNS}}(\mathbf Q^\star-\mathbf Q).
\end{aligned}
\label{eq:cns_core_update}
\end{equation}
The program restores candidate conservative means before the segment update
and converts $\mathbf Q^+$ back to primitive variables for the next state.
For admissible inputs, global mass, momentum, and energy are preserved in
exact arithmetic (\cref{prop:cns}).

\paragraph{Shallow-water equation (SWE).}
The neural output $\widehat{\update}^{n,1}=\Delta\mathbf h\in\R^N$ is a height
increment. Let $\mathbf h^n$ and $\widehat{\mathbf h}^{n+1}$ be the current and
predicted height vectors, $\mathbf d$ the centered increment, and
$\alpha\in[0,1]$ a global coefficient enforcing a positive height floor.
The update is
\begin{equation}
 \mathbf d=\Delta\mathbf h-\langle\Delta\mathbf h\rangle\mathbf1_N,
 \qquad
 \widehat{\mathbf h}^{n+1}=\mathbf h^n+\alpha\mathbf d.
 \label{eq:swe_centered_update}
\end{equation}
Centering and uniform scaling preserve mass on the uniform grid in exact
arithmetic while respecting the height floor for admissible inputs
(\cref{prop:swe}).

\paragraph{Diffusion--reaction (DR).}
The neural outputs $\boldsymbol\delta_1=\widehat{\update}^{n,1}$ and
$\boldsymbol\delta_2=\widehat{\update}^{n,2}$ are concentration-increment
proposals in $\R^{2N}$ for the two fields. The first is queried at the current
state $\state^n$; the second at the intermediate state $\boldsymbol\xi^{n,2}$.
The program constructs this state and then adds the averaged proposals to
obtain $\widehat{\state}^{n+1}$:
\begin{equation}
 \boldsymbol\xi^{n,2}=\state^n+\tfrac12\boldsymbol\delta_1,
 \qquad
 \widehat{\state}^{n+1}=\state^n+\tfrac12(\boldsymbol\delta_1+\boldsymbol\delta_2).
 \label{eq:dr_refinement}
\end{equation}

\paragraph{CFDBench (CFDB).}
For the velocity-prediction tasks in CFDB, reshape the neural output
$\widehat{\update}^{n,1}$ as $\widehat{\mathbf V}\in\R^{N\times2}$, containing
the two predicted velocity fields. The next velocity state $\mathbf V^{n+1}$
is assigned directly:
\begin{equation}
 \mathbf V^{n+1}=\widehat{\mathbf V}.
 \label{eq:cfdb_core_update}
\end{equation}
The program retains auxiliary support channels; output mapping restores the
observed layout.

\subsection{Joint Training and Autoregressive Prediction}

Let $\boldsymbol\pi$ be the nonnegative setting-weight vector, with
$\sum_{r\in\DataConfigs}\pi_r=1$, and let $\mathcal O_r\subseteq
\{1,\ldots,C_r\}$ be the set of scored physical-channel indices.
We sample settings from the categorical distribution
$\operatorname{cat}(\boldsymbol\pi)$. Sampling from $\Data_r$ below means drawing
a history--target window from its trajectories. With denominator stabilizer
$\epsilon=10^{-8}$, the expected next-state loss $\Loss$ supervises the
complete predictor--program pair:
\begin{equation}
 \Loss=\mathbb E_{\substack{r\sim\operatorname{cat}(\boldsymbol\pi)\\(\Hist_r^n,\state_r^{n+1})\sim\Data_r}}
 \frac{1}{|\mathcal O_r|}\sum_{c\in\mathcal O_r}
 \frac{\|\widehat{\state}_{r,c}^{n+1}-\state_{r,c}^{n+1}\|_2}
      {\|\state_{r,c}^{n+1}\|_2+\epsilon}.
 \label{eq:training_objective}
\end{equation}
The norms run over valid spatial cells of each field vector, excluding
padded cells. Scored channels are weighted equally within each setting.
Gradients through executed numerical arithmetic
train the selected expert and shared encoder without intermediate labels.
During rollout, each
prediction enters the history window and replaces its oldest frame.

New-family adaptation retains this objective, reusing the shared encoder
with a newly initialized system-specific predictor and a configured fixed
numerical update program, as detailed in Appendix~\ref{app:wave}.

\section{Experiments}
\label{sec:experiments}

We follow MoE-POT's benchmark \citep{wang2025moepot} for joint pre-training and task-specific
fine-tuning, complemented by ablations, within-family generalization,
and transfer to a new PDE family.

\paragraph{Datasets and evaluation.}
Six settings cover five PDE families: NS at viscosities
$\nu=10^{-5}$ and $\nu=10^{-3}$ \citep{li2021fno},
PDEBench CNS, SWE, and DR \citep{takamoto2022pdebench},
and CFDBench velocity prediction \citep{luo2023cfdbench}.
We retain the benchmark's ten-frame inputs, padding, and masking,
and evaluate full-horizon autoregressive predictions using mean relative
$L_2$ error (L2RE). Joint pre-training and task-specific fine-tuning
run for 1000 and 200 epochs, respectively, with final checkpoints
used for evaluation.

\paragraph{Baselines and model scales.}
We compare eight baseline methods across fifteen configurations:
six separately trained models FNO, UNet, FFNO, GK-T, Oformer,
and GNOT and three scales each of pre-trained FNO, DPOT, and MoE-POT.
We also report fine-tuned DPOT and MoE-POT results.
Baseline results and activated-parameter counts are taken from
MoE-POT Table~1.
HotPOT-T/S use approximately 17M/90M activated parameters,
matching MoE-POT-T/S.
Appendices~\ref{app:optimization} and~\ref{app:efficiency}
provide detailed evaluation, training, and resource settings.

\subsection{Main Experiments}
\label{sec:main_experiments}

{\setlength{\intextsep}{6pt}\setlength{\abovecaptionskip}{6pt}
\begin{table}[H]
\centering
\footnotesize
\setlength{\tabcolsep}{3.8pt}
\renewcommand{\arraystretch}{1.00}
\caption{Full-horizon mean L2RE on six PDE settings ($\downarrow$).
CNS lists Mach number $M$ and shear viscosity $\eta$, with bulk viscosity
$\zeta=\eta$. Bold: best within each training block; blue: best overall.
HotPOT uses final E1000/E200 checkpoints. Baselines follow MoE-POT Table~1
\citep{wang2025moepot}, with matched splits, histories, horizons, resolutions,
and metrics. Dashes denote errors above 20.}
\label{tab:main_results}
\sisetup{detect-weight=true,mode=text,group-digits=false,table-number-alignment=center}
\begin{tabular}{l c|S[table-format=1.4] S[table-format=1.6]|S[table-format=1.5] S[table-format=2.5] S[table-format=1.5]|S[table-format=1.5]}
\toprule
\multirow{2}{*}{Model} & \multirow{2}{*}{\shortstack{Activated\\Params}} & \multicolumn{2}{c|}{NS benchmarks} & \multicolumn{3}{c|}{PDEBench} & {\multirow{2}{*}{CFDBench}} \\
\cmidrule(lr){3-4}\cmidrule(lr){5-7}
& & {NS ($10^{-5}$)} & {NS ($10^{-3}$)} & {CNS$(0.1,0.01)$} & {SWE} & {DR} &  \\
\midrule
\multicolumn{2}{c|}{Small models} & \multicolumn{6}{c}{} \\
FNO & 0.5M & \cellcolor{white} \bfseries 0.156 & \cellcolor{white} 0.0128 & \cellcolor{white} 0.170 & \cellcolor{white} 0.00440 & \cellcolor{white} 0.120 & \cellcolor{white} 0.00761 \\
UNet & 25M & \cellcolor{white} 0.198 & \cellcolor{white} 0.0245 & \cellcolor{white} 0.357 & \cellcolor{white} 0.0521 & \cellcolor{white} 0.0971 & \cellcolor{white} 0.0209 \\
FFNO & 1.2M & \cellcolor{white} 0.161 & \cellcolor{white} 0.0256 & \cellcolor{white} 0.183 & \cellcolor{white} 0.00458 & \cellcolor{white} 0.161 & \cellcolor{white} 0.0990 \\
GK-T & 1.1M & \cellcolor{white} 0.260 & \cellcolor{white} 0.0148 & \cellcolor{white} 0.919 & \cellcolor{white} 0.0453 & \cellcolor{white} \bfseries 0.0120 & \cellcolor{white} 0.419 \\
Oformer & 1.8M & \cellcolor{white} 0.289 & \cellcolor{white} \bfseries 0.00319 & \cellcolor{white} \bfseries 0.161 & \cellcolor{white} 0.00474 & \cellcolor{white} 0.991 & \cellcolor{white} \bfseries 0.00444 \\
GNOT & 2.2M & \cellcolor{white} 0.590 & \cellcolor{white} 0.316 & \cellcolor{white} 0.533 & \cellcolor{white} \bfseries 0.00199 & \cellcolor{white} 0.930 & \cellcolor{white} 0.0216 \\
\midrule
\multicolumn{2}{c|}{Pre-trained} & \multicolumn{6}{c}{} \\
FNO-T & 10M & \cellcolor{white} 0.191 & \cellcolor{white} 0.0245 & \cellcolor{white} 0.0859 & \cellcolor{white} 11.0 & \cellcolor{white} 0.530 & \cellcolor{white} 0.00601 \\
FNO-S & 30M & \cellcolor{white} 0.157 & \cellcolor{white} 0.0225 & \cellcolor{white} 0.357 & \cellcolor{white} 0.184 & \cellcolor{white} 0.385 & \cellcolor{white} 0.00460 \\
FNO-M & 150M & \cellcolor{white} 0.141 & \cellcolor{white} 0.00730 & \cellcolor{white} {--} & \cellcolor{white} 0.0104 & \cellcolor{white} \bfseries 0.0112 & \cellcolor{white} {--} \\
DPOT-T & 7.5M & \cellcolor{white} 0.107 & \cellcolor{white} 0.0155 & \cellcolor{white} 0.0168 & \cellcolor{white} 0.00631 & \cellcolor{white} 0.0577 & \cellcolor{white} 0.00673 \\
DPOT-S & 30.8M & \cellcolor{white} 0.0688 & \cellcolor{white} 0.00781 & \cellcolor{white} 0.0244 & \cellcolor{white} 0.00392 & \cellcolor{white} 0.0367 & \cellcolor{white} 0.00870 \\
DPOT-M & 122M & \cellcolor{white} 0.0569 & \cellcolor{white} 0.00708 & \cellcolor{white} 0.0224 & \cellcolor{white} 0.00247 & \cellcolor{white} 0.0288 & \cellcolor{white} 0.0113 \\
MoE-POT-T & 17M & \cellcolor{white} 0.0682 & \cellcolor{white} 0.00768 & \cellcolor{white} 0.0105 & \cellcolor{white} 0.00640 & \cellcolor{white} 0.0411 & \cellcolor{white} 0.00529 \\
MoE-POT-S & 90M & \cellcolor{white} 0.0552 & \cellcolor{white} 0.00583 & \cellcolor{white} 0.00959 & \cellcolor{white} 0.00289 & \cellcolor{white} 0.0342 & \cellcolor{white} 0.00448 \\
MoE-POT-M & 288M & \cellcolor{white} 0.0528 & \cellcolor{white} 0.00570 & \cellcolor{white} 0.00914 & \cellcolor{white} 0.00299 & \cellcolor{white} 0.0300 & \cellcolor{white} 0.00513 \\
\textbf{HotPOT-T} & 17M & \cellcolor{white} 0.0315 & \cellcolor{white} 0.00325 & \cellcolor{white} 0.00671 & \cellcolor{white} 0.00177 & \cellcolor{white} 0.0150 & \cellcolor{white} \bfseries 0.00219 \\
\textbf{HotPOT-S} & 90M & \cellcolor{white} \bfseries 0.0299 & \cellcolor{white} \bfseries 0.00323 & \cellcolor{white} \bfseries 0.00623 & \cellcolor{white} \bfseries 0.00166 & \cellcolor{white} 0.0326 & \cellcolor{white} 0.00225 \\
\midrule
\multicolumn{2}{c|}{Fine-tuned} & \multicolumn{6}{c}{} \\
DPOT-T & 7.5M & \cellcolor{white} 0.0700 & \cellcolor{white} 0.00725 & \cellcolor{white} 0.0168 & \cellcolor{white} 0.00313 & \cellcolor{white} 0.0289 & \cellcolor{white} 0.00391 \\
DPOT-S & 30.8M & \cellcolor{white} 0.0502 & \cellcolor{white} 0.00635 & \cellcolor{white} 0.0238 & \cellcolor{white} 0.00315 & \cellcolor{white} 0.0215 & \cellcolor{white} 0.00586 \\
DPOT-M & 122M & \cellcolor{white} 0.0424 & \cellcolor{white} 0.00593 & \cellcolor{white} 0.0221 & \cellcolor{white} 0.00260 & \cellcolor{white} 0.0175 & \cellcolor{white} 0.00653 \\
MoE-POT-T & 17M & \cellcolor{white} 0.0456 & \cellcolor{white} 0.00493 & \cellcolor{white} 0.00746 & \cellcolor{white} 0.00305 & \cellcolor{white} 0.0188 & \cellcolor{white} 0.00330 \\
MoE-POT-S & 90M & \cellcolor{white} 0.0361 & \cellcolor{white} 0.00376 & \cellcolor{white} 0.00742 & \cellcolor{white} 0.00251 & \cellcolor{white} 0.0182 & \cellcolor{white} 0.00313 \\
MoE-POT-M & 288M & \cellcolor{white} 0.0351 & \cellcolor{white} 0.00388 & \cellcolor{white} 0.00744 & \cellcolor{white} 0.00193 & \cellcolor{white} 0.0140 & \cellcolor{white} 0.00398 \\
\textbf{HotPOT-T} & 17M & \cellcolor{white} 0.0215 & \cellcolor{bestblue!16} \bfseries 0.000481 & \cellcolor{white} 0.00583 & \cellcolor{white} 0.00162 & \cellcolor{bestblue!16} \bfseries 0.00163 & \cellcolor{white} 0.00208 \\
\textbf{HotPOT-S} & 90M & \cellcolor{bestblue!16} \bfseries 0.0193 & \cellcolor{white} 0.000483 & \cellcolor{bestblue!16} \bfseries 0.00536 & \cellcolor{bestblue!16} \bfseries 0.00154 & \cellcolor{white} 0.00436 & \cellcolor{bestblue!16} \bfseries 0.00202 \\
\bottomrule
\end{tabular}
\end{table}
}

\paragraph{Pre-training and fine-tuning.}
Across six settings, two activated-parameter scales, and both training stages,
HotPOT outperforms MoE-POT in all 24 matched-scale comparisons
(\cref{tab:main_results}). Mean relative error reductions are 47.1\% after
pre-training and 54.3\% after fine-tuning, or 50.7\% across both stages.
For example, after fine-tuning, HotPOT-T reduces L2RE relative to MoE-POT-T
by 90.3\% on NS at $\nu=10^{-3}$ and 91.3\% on DR.
Appendices~\ref{app:qualitative}--\ref{app:spacetime} provide paired field
visualizations and space--time error plots. 

\paragraph{Accuracy with fewer activated parameters.}
The 17M HotPOT-T model also outperforms the 90M and 288M MoE-POT models
on every setting in both stages. Across the twelve setting--stage
comparisons, its mean relative error reductions are 47.7\% and 46.5\%,
respectively, while using approximately 81.1\% and 94.1\% fewer
activated parameters.

\paragraph{Parameter accounting.}
Each HotPOT-T active path uses 16.77--17.18M parameters, comprising the 2.012M
shared encoder and the selected system-specific predictor.
Detailed parameter accounting and numerical query counts are provided in
Appendix~\ref{app:efficiency}.

\subsection{Ablation Studies}
\label{sec:ablations}

Full denotes HotPOT-T. All models use 17M configurations and fixed E1000
checkpoints; Mean equally weights six full-horizon L2REs.

\paragraph{Ablations of cross-PDE sharing.}
We test whether to share history encoding and where sharing belongs
relative to private coupling. \textit{Learned private history} gives each
PDE family a separate trainable encoder, retaining expert predictors and
numerical updates, to test history sharing.
\textit{Late-shared evolution} moves shared processing after private
field coupling to test this sharing boundary.

In \cref{tab:ablation}, Full lowers mean error by 3.2\% relative to
private history while reducing stored parameters by 7.6\%.
Private history performs better on SWE and slightly better on DR.
Late sharing raises mean error from 0.0101 to 0.0286, worsening
five settings. The placement controls in Appendix~\ref{app:transfer}
favor history processing before private coupling over late evolution processing;
sharing at the history stage further improves mean accuracy and reduces storage.
{\setlength{\intextsep}{6pt}
\begin{table}[H]
\centering\footnotesize
\setlength{\tabcolsep}{3.5pt}
\caption{Cross-PDE sharing ablations. Full-horizon L2RE; bold marks minima before rounding.}
\label{tab:ablation}
\sisetup{detect-weight=true,mode=text,group-digits=false,table-number-alignment=center}
\begin{tabular}{l S[table-format=1.4] S[table-format=1.5] S[table-format=1.5] S[table-format=1.6] S[table-format=1.4] S[table-format=1.5] S[table-format=1.4]}\toprule
Model & {NS $10^{-5}$} & {NS $10^{-3}$} & {CNS} & {SWE} & {DR} & {CFDB} & {Mean} \\
\midrule
Full HotPOT & \bfseries 0.0315 & \bfseries 0.00325 & \bfseries 0.00671 & 0.00177 & 0.0150 & 0.00219 & \bfseries 0.0101 \\
Learned private history & 0.0322 & 0.00420 & 0.00790 & \bfseries 0.000962 & \bfseries 0.0149 & 0.00222 & 0.0104 \\
Late-shared evolution & 0.0331 & 0.00413 & 0.00941 & 0.00421 & 0.119 & \bfseries 0.00217 & 0.0286 \\
\bottomrule\end{tabular}
\end{table}}

\paragraph{Ablations of numerical experts.}
We test the predictor--update interface across PDE families and, specifically
for NS, whether its numerical update provides a benefit beyond repeated
neural refinement.
\textit{Direct-state predictor} (Direct) replaces predicted quantities and numerical
execution with residual state prediction, retaining the neural architecture,
inputs, masks, and nominal readout-call counts. This changes the interface
and stage-state queries together.
\textit{Generic iteration} targets the NS expert, replacing its numerical
update with repeated, state-dependent residual predictions using shared
readout weights. Other expert paths retain their updates.

In \cref{tab:numerical_ablation}, Full lowers mean error from 0.0324
to 0.0101 relative to direct prediction, improving five settings;
Direct is slightly better on CFDB. Relative to Generic iteration, Full lowers
NS error by 1.5\% at $\nu=10^{-5}$ and 19.7\% at $\nu=10^{-3}$.
These comparisons support the paired predictor--update interface for aggregate
accuracy and NS numerical execution over generic learned iteration on both
viscosity settings.
Appendix~\ref{app:transfer} details the controls and temporal analyses;
Appendix~\ref{app:physics} reports physical diagnostics.
{\setlength{\intextsep}{6pt}
\begin{table}[H]
\centering\footnotesize
\setlength{\tabcolsep}{3.5pt}
\caption{Numerical-expert ablations. Full-horizon L2RE; bold marks minima before rounding.}
\label{tab:numerical_ablation}
\sisetup{detect-weight=true,mode=text,group-digits=false,table-number-alignment=center}
\begin{tabular}{l S[table-format=1.4] S[table-format=1.5] S[table-format=1.5] S[table-format=1.5] S[table-format=1.4] S[table-format=1.5] S[table-format=1.4]}\toprule
Model & {NS $10^{-5}$} & {NS $10^{-3}$} & {CNS} & {SWE} & {DR} & {CFDB} & {Mean} \\
\midrule
Full HotPOT & \bfseries 0.0315 & \bfseries 0.00325 & \bfseries 0.00671 & \bfseries 0.00177 & \bfseries 0.0150 & 0.00219 & \bfseries 0.0101 \\
Direct-state predictor & 0.0525 & 0.00625 & 0.0104 & 0.00190 & 0.122 & \bfseries 0.00211 & 0.0324 \\
Generic iteration & 0.0320 & 0.00404 & 0.00720 & 0.00177 & 0.0150 & 0.00234 & 0.0104 \\
\bottomrule\end{tabular}
\end{table}}

\subsection{Within-Family Generalization}
\label{sec:generalization}

We evaluate pre-trained 17M models at the original resolutions under unseen
conditions within supported PDE families.

\paragraph{Zero-shot prediction.}
For CNS, $M$ is the Mach number and $\eta$ the shear viscosity.
The bulk viscosity is $\zeta=\eta$ in the reported datasets
\citep{takamoto2022pdebench}.
Fixed pre-training checkpoints predict CNS at $(M,\eta)=(1,10^{-2})$
and CFDB boundary-condition, geometry, and physical-property subsets
within the cavity and cylinder families, without target training or
checkpoint selection. We report L2RE over ten forecast steps, denoted H10,
with case-uniform masked aggregation for CFDB. This differs from the
full-horizon main benchmark. \Cref{tab:zero_shot} shows that HotPOT lowers
error relative to MoE-POT in all seven groups, including a 55.5\% CNS reduction.

{\setlength{\intextsep}{6pt}
\begin{table}[H]
\centering\footnotesize
\setlength{\tabcolsep}{5pt}
\caption{Zero-shot H10 L2RE ($\downarrow$) using fixed 17M pre-training
checkpoints without target adaptation. CFDB uses case-uniform masked aggregation.}
\label{tab:zero_shot}
\sisetup{detect-weight=true,mode=text,group-digits=false,table-number-alignment=center}
\begin{tabular}{l S[table-format=1.3] S[table-format=1.5] S[table-format=1.4] S[table-format=1.5] S[table-format=1.5] S[table-format=1.5] S[table-format=1.6]}\toprule
 & {CNS} & \multicolumn{3}{c}{Cavity} & \multicolumn{3}{c}{Cylinder} \\
\cmidrule(lr){3-5}\cmidrule(lr){6-8}
Model & {$(M,\eta)=(1,10^{-2})$} & {BC} & {Geo$^\dagger$} & {Prop} & {BC} & {Geo$^\dagger$} & {Prop} \\
\midrule
HotPOT-T & \bfseries 0.189 & \bfseries 0.00568 & \bfseries 0.0169 & \bfseries 0.00467 & \bfseries 0.00119 & \bfseries 0.00180 & \bfseries 0.000577 \\
MoE-POT-T & 0.423 & 0.0276 & 0.0439 & 0.0116 & 0.00802 & 0.0117 & 0.00223 \\
\bottomrule\end{tabular}
\par\vspace{3pt}
\parbox{\linewidth}{\footnotesize BC, Geo, and Prop denote boundary-condition,
geometry, and physical-property subsets. $^\dagger$Each geometry subset contains
one held-out case within its respective cavity or cylinder family.}
\end{table}}

\paragraph{Few-shot adaptation.}
Pre-trained models adapt to NS ($\nu=10^{-4}$) and CNS
($(M,\eta)=(1,10^{-2})$) using 16, 64, or 256 training trajectories.
On 200 test trajectories per target, HotPOT achieves lower full-rollout L2RE
in all six comparisons, with error decreasing across budgets (\cref{tab:fewshot}).
Rollout validation selects checkpoints; Appendix~\ref{app:fewshot_evaluation}
details the protocol.

{\setlength{\intextsep}{6pt}
\begin{table}[H]
\centering\footnotesize
\setlength{\tabcolsep}{6pt}
\caption{Within-family adaptation: model-feedback L2RE on 200 test trajectories
per target (NS/CNS: 20/11 steps). Full-rollout validation selects checkpoints
during 500-epoch adaptation.}
\label{tab:fewshot}
\sisetup{detect-weight=true,mode=text,group-digits=false,table-number-alignment=center}
\begin{tabular}{l S[table-format=1.3] S[table-format=1.4] S[table-format=1.4] S[table-format=1.4] S[table-format=1.4] S[table-format=1.4]}\toprule
 & \multicolumn{3}{c}{NS $\nu=10^{-4}$} & \multicolumn{3}{c}{CNS $(M,\eta)=(1,10^{-2})$} \\
\cmidrule(lr){2-4}\cmidrule(lr){5-7}
Model / shots & {16} & {64} & {256} & {16} & {64} & {256} \\
\midrule
HotPOT-T & \bfseries 0.124 & \bfseries 0.0964 & \bfseries 0.0819 & \bfseries 0.0983 & \bfseries 0.0599 & \bfseries 0.0365 \\
MoE-POT-T & 0.304 & 0.206 & 0.108 & 0.365 & 0.153 & 0.0735 \\
\bottomrule\end{tabular}
\end{table}}

\subsection{Transfer to a New PDE Family}
\label{sec:new_family}

Wave-Layer and Wave-Gauss \citep{herde2024poseidon} belong to one wave-equation
family absent from both pre-training mixtures. HotPOT jointly trains its transferred
encoder and a new scalar predictor. Scratch uses the same architecture, initial
predictor weights, residual updates, and training protocol, but a random encoder.
This comparison tests whether shared encoder pre-training benefits adaptation
to a new PDE family.

At 16, 32, 64, and 128 shots, rollout validation selects checkpoints evaluated on 200
held-out trajectories per dataset using mean per-step L2RE. Encoder transfer
outperforms both baselines in all eight comparisons. Error reductions are
11.9--39.8\% relative to MoE-POT full fine-tuning; relative to Scratch, they are
6.5--8.4\% on Wave-Layer and 0.6--16.4\% on Wave-Gauss, supporting transfer of the pre-trained history
representation. Appendix~\ref{app:wave} gives results (\cref{tab:wave_transfer}),
protocol, and splits.

\section{Discussion and Conclusion}
\label{sec:conclusion}

HotPOT combines shared field-history encoding with PDE-specific
neural--numerical experts, allowing different equations to use different
prediction quantities and fixed update operations. Across six
pre-training settings on regular grids, HotPOT outperforms MoE-POT at
matched activated-parameter scales before and after fine-tuning.
Ablations support the sharing and prediction--execution designs, while
transfer to an unseen wave-equation family supports encoder reuse with
limited target data. System-specific experts increase storage as the
model covers more PDEs, and adapting to a new family requires training
a predictor and configuring its numerical update. Multi-stage updates
can also increase inference cost through repeated neural readouts.
Future work will explore reusable experts, automatic numerical-program
composition, and efficient execution for broader PDE collections.

\FloatBarrier
\label{page:main-end}

\subsection*{AI use statement}
Generative AI tools assisted with language editing.

\subsection*{Reproducibility statement}
The appendix specifies the data interface, model dimensions, system-specific
numerical updates, optimization schedule, parameter accounting, and checkpoint
selection. Code, configurations, and fixed evaluation scripts are available in the
public repository linked in the abstract.

\bibliography{references}
\bibliographystyle{iclr2027_conference}

\clearpage
\appendix
\raggedbottom
\renewcommand{\topfraction}{0.95}
\renewcommand{\bottomfraction}{0.9}
\renewcommand{\textfraction}{0.05}
\renewcommand{\floatpagefraction}{0.95}
\setcounter{topnumber}{4}
\setcounter{bottomnumber}{3}
\setcounter{totalnumber}{6}
\makeatletter
\setlength{\@fptop}{0pt}
\makeatother
\section{Implementation and Additional Results}
\label{app:details}

\subsection{Backbone implementation}
\label{app:backbone}

The patch encoding, temporal aggregation, and grouped Fourier/local blocks
below build on DPOT and MoE-POT \citep{hao2024dpot,wang2025moepot}.

All inputs use ten saved frames at a common $128^2$ analysis resolution.
The deterministic feature map constructs six channels per scalar field:
normalized value, two coordinates, boundary distance, geometry support,
and field validity. Both scales use patch size 8, four adaptive Fourier
neural operator (AFNO) channel groups, and 16 retained modes. The scale-dependent dimensions are listed in
\cref{tab:architecture_dimensions}. The context decoder expands to full
resolution before two $3\times3$ convolutions and a $1\times1$ typed readout.
Per-field history means and standard deviations are computed over the ten
frames and spatial grid, with standard deviations floored at $10^{-6}$.
The system-specific mixers receive these means and log-scales in addition to
shared features and task embeddings; the statistics have no learned
parameters.
Coordinates
span $[-1,1]$; boundary distance is constant on periodic grids and the
minimum coordinate distance to an edge otherwise. These are input-feature
conventions, separate from the expert's numerical update program.

\begin{table}[H]
\centering\small
\caption{Backbone and system-specific predictor dimensions. The neural template
and numerical programs are common to both scales.}
\label{tab:architecture_dimensions}
\begin{tabular}{lrr}\toprule
Component & HotPOT-T (17M active) & HotPOT-S (90M active) \\
\midrule
Shared width / depth / FFN width & 256 / 2 / 512 & 512 / 3 / 2048 \\
System-specific width / depth / FFN width & 512 / 5 / 2048 & 1024 / 7 / 3968 \\
Decoded context channels & 32 & 64 \\
Typed-head hidden width & 128 & 256 \\
Task-feature channels / hidden width & 16 / 64 & 32 / 128 \\
Task-embedding hidden width & 128 & 128 \\
\bottomrule\end{tabular}
\end{table}

\paragraph{Input encoding and positional embedding.}

For a grid of $N_x\times N_y$ cells, flattening only the spatial axes gives
the feature matrix $\mathbf X^j\in\R^{N_xN_y\times d_{\mathrm{in}}}$
at history position $j\in\{1,\ldots,T\}$, with $d_{\mathrm{in}}=6$.
The implementation embeds these six features in an 11-slot tensor with
five fixed zero slots before the patch convolution. This fixed embedding
is included in $f_{\mathrm{patch}}$; parameter counts include the stored
convolution weights.
A convolutional patch encoder $f_{\mathrm{patch}}$ maps non-overlapping
$p\times p$ patches to $d$-dimensional tokens, where $p=8$ and
$d=d_{\mathrm{enc}}$ in this encoder. With
$P=(N_x/p)(N_y/p)$ and learned positional matrix
$\mathbf E_{\mathrm{pos}}\in\R^{P\times d}$,
\begin{equation}
 \mathbf Z^j=f_{\mathrm{patch}}(\mathbf X^j)+\mathbf E_{\mathrm{pos}},
 \qquad \mathbf Z^j\in\R^{P\times d}.
 \label{eq:patch_encoding}
\end{equation}
The patch encoder consists of a strided convolution, GELU
\citep{hendrycks2016gelu}, and a pointwise projection. Its parameters,
including $\mathbf E_{\mathrm{pos}}$, are part of $\boldsymbol\theta$
and are shared across history frames, scalar fields, and datasets.
Spatial convolutions and FFTs reshape these matrices to the two-dimensional
grid and flatten them back in the same order.

\paragraph{Temporal aggregation.}
With field, setting, and saved-time indices suppressed, let
$\mathbf z_i^j=(\mathbf Z^j_{i,:})^\top\in\R^d$ be the token vector at
patch $i$. The matrix $\mathbf Z_i^{\mathrm{hist}}\in\R^{T\times d}$ in
\cref{fig:architecture} stacks $\{(\mathbf z_i^j)^\top\}_{j=1}^T$.
A real-valued adaptation of Fourier-feature aggregation uses learned
channel-mixing matrices
$\mathbf W_j\in\R^{d\times d}$ and frequency vector
$\boldsymbol\kappa\in\R_{>0}^d$. Let $\lambda_j=(j-T)/(T-1)$ denote
the normalized temporal lag, $\odot$ elementwise multiplication, and
$\mathbf v_i^0\in\R^d$ the aggregated token. The cosine acts componentwise:
\begin{equation}
 \mathbf v_i^0=\sum_{j=1}^{T}\mathbf W_j^\top
 \left[\mathbf z_i^j\odot\cos(\boldsymbol\kappa\lambda_j)\right].
 \label{eq:temporal_aggregation}
\end{equation}
Let $\mathbf V^0\in\R^{P\times d}$ denote the matrix formed by stacking
$\{(\mathbf v_i^0)^\top\}_{i=1}^P$.

\paragraph{Fourier and local mixing blocks.}
Spatial processing uses the grouped spectral mixer inherited from the
MoE-POT implementation, following the AFNO/Fourier operator-transformer
lineage \citep{guibas2022afno}. The mixer applies
a two-dimensional fast Fourier transform (FFT), splits feature channels
into groups, transforms each group with a two-layer frequency-domain
multilayer perceptron (MLP), and returns to the spatial grid by inverse FFT.
The MLP parameters are shared over retained frequency locations; GELU acts
separately on real and imaginary components, with no soft-thresholding.
Let $f_{\mathrm{fft}}$ and $f_{\mathrm{ifft}}$ denote the spatial FFT
and its inverse, and let $f_{\mathrm{freq},b}$ be the grouped MLP with
unretained modes set to zero. For block index $b$ and input
$\mathbf U\in\R^{P\times d}$, including
the inherited internal skip connection gives
\begin{equation}
 f_{\mathrm{spec},b}(\mathbf U)=\mathbf U+f_{\mathrm{ifft}}
 \left(f_{\mathrm{freq},b}(f_{\mathrm{fft}}(\mathbf U))\right).
 \label{eq:fourier_module}
\end{equation}
Each block combines this spectral path
with a local convolutional function $f_{\mathrm{local},b}$ and a channel
feed-forward function $f_{\mathrm{ffn},b}$. The functions
$f_{\mathrm{gn},b,1}$ and $f_{\mathrm{gn},b,2}$ are the block's two group
normalizations \citep{wu2018groupnorm}. Let $\mathbf V^b$, $\mathbf U^b$,
$\widetilde{\mathbf V}^b$, and $\mathbf V^{b+1}\in\R^{P\times d}$ denote
the block input, normalized input, mixed features, and output, respectively:
\begin{align}
 \mathbf U^b&=f_{\mathrm{gn},b,1}(\mathbf V^b),\nonumber\\
 \widetilde{\mathbf V}^b
   &=\mathbf V^b+f_{\mathrm{spec},b}(\mathbf U^b)
     +f_{\mathrm{local},b}(\mathbf U^b),\nonumber\\
 \mathbf V^{b+1}
   &=\widetilde{\mathbf V}^b+
     f_{\mathrm{ffn},b}\!\left(f_{\mathrm{gn},b,2}
       (\widetilde{\mathbf V}^b)\right).
 \label{eq:backbone_block}
\end{align}
All spatial feature matrices have shape $P\times d$, where $d$ is the
width of the shared or system-specific block in question. The local path
uses a depthwise $3\times3$ convolution, GELU, and a $1\times1$
convolution; the FFN uses two $1\times1$ convolutions with GELU.
After the shared blocks and a final normalization, the encoder produces
$\latent_{r,c}$; their parameters belong to $\boldsymbol\theta$.
The corresponding system-specific blocks use $\boldsymbol\phi_{\systemtype(r)}$.

\subsection{System-specific numerical update programs}
\label{app:expert_view}

Each expert instantiates \cref{eq:execution} by pairing a system-specific
neural predictor with a numerical update program. Field-wise history encoding,
system-specific state evolution, and spatial context decoding run once per
saved interval. The program then constructs query states and reuses the
predictor's readout. All predicted
quantities are learned through next-state supervision; they require no
separate rate, action, or increment labels. The programs below detail
the five alternatives illustrated in \cref{fig:architecture}. We suppress
the setting index $r$ below; $N$ denotes the grid-cell count and a
superscript $+$ the updated state. Vectors collect spatial entries;
multi-channel arrays are written as matrices when channels form columns.
The all-ones vector is $\mathbf 1_N\in\R^N$, and
$\langle\mathbf a\rangle=N^{-1}\mathbf 1_N^\top\mathbf a$ is the
spatial mean of a field vector $\mathbf a$.

\subsubsection{NS: transport-rate integration}
\label{app:ns_program}

The neural transport-rate query in \cref{eq:finite_step_update} uses stage
vorticity, history context, and deterministic Biot--Savart transport and forcing
features. Let $f_{\mathrm{rate}}:\R^N\to\R^N$ denote this query with the
decoded history context and physical coefficients held fixed. Using the
diffused and midpoint states $\boldsymbol\omega_a,\boldsymbol\omega_m$ from
\cref{eq:ns_core_update}, its two outputs are
\begin{equation}
 \widehat{\mathbf r}_1=f_{\mathrm{rate}}(\boldsymbol\omega_a),\qquad
 \widehat{\mathbf r}_2=f_{\mathrm{rate}}(\boldsymbol\omega_m).
 \label{eq:ns_rate_queries}
\end{equation}
The program adds forcing to each rate and uses the explicit midpoint update
between two diffusion half-steps, forming a Strang split
\citep{strang1968construction}. The same decoded context and readout weights
are reused at both queries and throughout the saved interval.

For the diffusion operator $f_{\mathrm{diff},\nu}^{s}$, let $\delta t_{\mathrm{CN}}>0$ be a
Crank--Nicolson time increment, $s$ the diffusion duration, and
$m=s/\delta t_{\mathrm{CN}}\in\mathbb N$ its number of increments. Let $f_{\mathrm{fft}}$ and
$f_{\mathrm{ifft}}$ be the discrete Fourier transform on the periodic grid and
its inverse. For Fourier mode $\ell$, let $\lambda_\ell\ge0$ be the eigenvalue
of the negative Laplacian. With viscosity $\nu$, define the mode multiplier
$b_\ell$ and the diffusion update of a vorticity vector
$\boldsymbol\omega\in\R^N$ by
\begin{equation}
 b_\ell=\frac{1-\delta t_{\mathrm{CN}}\nu\lambda_\ell/2}{1+\delta t_{\mathrm{CN}}\nu\lambda_\ell/2},\qquad
 f_{\mathrm{diff},\nu}^{s}(\boldsymbol\omega)
 =f_{\mathrm{ifft}}\!\left(
 \{b_\ell^{m}(f_{\mathrm{fft}}(\boldsymbol\omega))_\ell\}_\ell\right).
 \label{eq:ns_diffusion}
\end{equation}
This composes $m$ Crank--Nicolson diffusion increments
\citep{crank1947practical}. The implementation uses a positive multiplier,
with $\delta t_{\mathrm{CN}}\nu\lambda_\ell/2<1$, and evaluates its power in logarithmic form.
Each half-step takes $s=\delta t/2$; composing the split substeps covers the
saved interval $\Delta t_r$.

\subsubsection{CNS: conservative action and admissible update}
\label{app:cns_program}

At cell $i$, let $u_{x,i},u_{y,i}$ denote velocity components,
$\rho_i$ density, and $p_i$ pressure. Let $\gamma>1$ be the specific-heat ratio;
the total energy density is
$E_i=p_i/(\gamma-1)+\rho_i(u_{x,i}^2+u_{y,i}^2)/2$.
The conservative vector is
$\mathbf c_i=(\rho_i,\rho_i u_{x,i},\rho_i u_{y,i},E_i)^\top\in\R^4$.
The full conservative state is the matrix $\mathbf Q\in\R^{N\times4}$
with row $\mathbf c_i^\top$.
One predictor query supplies a normalized action, reshaped as
$\mathbf A\in\R^{N\times4}$. Its channel means are
$\bar{\mathbf a}=N^{-1}\mathbf A^\top\mathbf1_N\in\R^4$.
Let $\epsilon_s>0$ be a numerical scale floor, and let
$f_{\mathrm{scale}}:\R^{N\times4}\to\R^4$ return the per-channel
spatial root mean square (RMS) of centered conservative fields, floored
at $\epsilon_s$; $\diag$ places these scales on a matrix diagonal.
Denote the conservative increment by $\Delta\mathbf Q$ and its candidate
state by $\mathbf Q^\star$. The program constructs
\begin{equation}
 \Delta\mathbf Q=(\mathbf A-\mathbf1_N\bar{\mathbf a}^\top)
     \diag(f_{\mathrm{scale}}(\mathbf Q)),\qquad
 \mathbf Q^\star=\mathbf Q+\Delta\mathbf Q.
 \label{eq:cns_action}
\end{equation}
Multiplication by the diagonal matrix scales each
channel uniformly over the grid. The program restores the candidate
conservative means to those of $\mathbf Q$. Let $\rho_{\min},p_{\min}>0$
be density and pressure floors, and let $\alpha_{\mathrm{CNS}}\in[0,1]$
be one global coefficient ensuring these strict bounds in the updated
state $\mathbf Q^+$. The segment update is
\begin{equation}
 \mathbf Q^+=\mathbf Q+\alpha_{\mathrm{CNS}}(\mathbf Q^\star-\mathbf Q),
 \qquad \alpha_{\mathrm{CNS}}\in[0,1].
 \label{eq:cns_segment_update}
\end{equation}
For an admissible starting state, an admissible candidate uses
$\alpha_{\mathrm{CNS}}=1$; otherwise a prescribed number
$B_{\mathrm{bis}}$ of bisections determines an admissible lower endpoint
of the segment. Boolean tests select the segment, and
autograd differentiates its executed arithmetic. The result is converted
back to primitive variables. The preserved quantities are the global totals
of the four conservative channels on the uniform grid.
\Cref{prop:cns} establishes the admissibility and conservation properties.

\subsubsection{DR: two-query proposal refinement}
\label{app:dr_program}

The sum of two predictor output groups gives a finite-interval concentration
proposal $\boldsymbol\delta_1\in\R^{2N}$ at the current state $\state^n$;
the two concentration fields are concatenated in the same order as $\state^n$.
A second query at an intermediate state gives $\boldsymbol\delta_2$;
the program averages the two proposals as in \cref{eq:dr_refinement}.
Both queries use the same context and readout weights. These outputs are
concentration proposals learned through the resulting next-state error;
the fixed operation is the query-and-average refinement.

\subsubsection{SWE: mass-preserving height increment}
\label{app:swe_program}

Only height is observed, with current field vector $\mathbf h^n\in\R^N$ and
components $h_i^n$. One predictor query directly predicts a height
increment $\Delta\mathbf h\in\R^N$ in the observed field units. The program
centers it and applies one coefficient $\alpha\in[0,1]$ to the entire
grid, as specified in \cref{eq:swe_centered_update}.
Let $h_{\min}>0$ be a prescribed height floor and $\epsilon_d>0$ a
numerical denominator floor. For an initial height satisfying
$h_i^n\ge h_{\min}$, the coefficient in \cref{eq:swe_limiter} ensures
$\widehat h_i^{n+1}\ge h_{\min}$ at every cell $i$.
Since the increment has zero spatial mean and $\alpha$ is uniform,
the update preserves mass in exact arithmetic. The role of the limiter
is to shorten the complete increment, rather than clip individual cells.
\Cref{prop:swe} gives the formula and proof.

\subsubsection{CFDB: next-velocity assignment}
\label{app:cfdb_program}

One predictor query estimates the next two velocity fields, concatenated
as a vector in $\R^{2N}$. The program replaces the current velocity
channels and copies auxiliary support channels unchanged, before
the output mapping restores the evaluation resolution and channel layout.
Within CFDB, spectral/local layers are shared across geometry variants,
while pointwise FFNs and velocity heads are geometry-specific.

\subsection{Properties of the numerical updates}
\label{app:theory}

We establish the update properties used by the SWE and CNS numerical
programs on a uniform grid. All arrays have finitely many finite-valued
entries, arithmetic is exact, and each admissibility factor is a scalar
shared by all spatial cells of one sample.

\begin{proposition}[Mass and height-floor preservation]
\label{prop:swe}
On a grid with $N$ cells and prescribed height floor $h_{\min}>0$,
let $\mathbf h\in\R^N$ satisfy $h_i\ge h_{\min}$, and let
$\mathbf d\in\R^N$ satisfy $\sum_i d_i=0$. Given a denominator floor
$\epsilon_d>0$, denote the cellwise step bounds by $r_i$, their clipped
minimum by $\alpha$, and the updated height by $\mathbf h^+$. Define
\begin{equation}
 r_i=\begin{cases}
 (h_i-h_{\min})/\max(-d_i,\epsilon_d),&d_i<0,\\
 1,&d_i\ge0,
 \end{cases}
 \qquad
 \alpha=\min(1,\min_i r_i),\qquad \mathbf h^+=\mathbf h+\alpha\mathbf d.
 \label{eq:swe_limiter}
\end{equation}
Then $0\le\alpha\le1$, $h_i^+\ge h_{\min}$ for every cell, and
$\sum_i h_i^+=\sum_i h_i$.
\end{proposition}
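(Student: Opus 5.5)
The argument is a direct verification of the three conclusions in the order they are stated, using only the definition in \cref{eq:swe_limiter}.

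\textbf{Range of $\alpha$.} Each $r_i$ is nonnegative. If $d_i\ge0$ then $r_i=1$. If $d_i<0$, the numerator $h_i-h_{\min}$ is nonnegative by hypothesis and the denominator $\max(-d_i,\epsilon_d)$ is strictly positive. Hence $\min_i r_i\ge0$, and taking the minimum with $1$ gives $0\le\alpha\le1$. The minimum over finitely many cells is attained, so $\alpha$ is well defined.

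\textbf{Height floor.} Fix a cell $i$ and split into two cases.
\begin{itemize}
\item If $d_i\ge0$, then $h_i^+=h_i+\alpha d_i\ge h_i\ge h_{\min}$, since $\alpha\ge0$.
\item If $d_i<0$, then $\alpha\le r_i$, and this is the step needing care because of the floored denominator. Since $\max(-d_i,\epsilon_d)\ge -d_i>0$, we get
\[
\alpha(-d_i)\le r_i(-d_i)=\frac{(h_i-h_{\min})(-d_i)}{\max(-d_i,\epsilon_d)}\le h_i-h_{\min},
\]
where the last inequality uses $h_i-h_{\min}\ge0$ and $(-d_i)/\max(-d_i,\epsilon_d)\le1$. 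Rearranging gives $h_i^+=h_i-\alpha(-d_i)\ge h_{\min}$.
\end{itemize}
The floor $\epsilon_d$ only makes $r_i$ smaller, so it can only make the step more conservative. This is why the bound survives when $-d_i<\epsilon_d$.

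\textbf{Mass.} Because $\alpha$ is a single scalar shared by all cells, $\sum_i h_i^+=\sum_i h_i+\alpha\sum_i d_i=\sum_i h_i$, using $\sum_i d_i=0$.

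The only point needing attention is the inequality chain in the second case, specifically confirming that replacing $-d_i$ by $\max(-d_i,\epsilon_d)$ in the denominator preserves the floor. Exact arithmetic is assumed throughout, so no rounding issues arise.
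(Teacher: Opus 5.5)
Your proof is correct and follows essentially the same route as the paper's: nonnegativity of the ratios gives $0\le\alpha\le1$, the floored denominator satisfies $\max(-d_i,\epsilon_d)\ge -d_i$, which yields $\alpha(-d_i)\le h_i-h_{\min}$ for cells with $d_i<0$, and the shared scalar $\alpha$ together with $\sum_i d_i=0$ gives mass preservation. Your version spells out the inequality chain a little more explicitly than the paper does, but the argument is the same.
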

\begin{proof}
Every ratio is nonnegative. For $d_i\ge0$, the update cannot decrease
$h_i$. For $d_i<0$, the denominator in \cref{eq:swe_limiter} is at least
$-d_i$, so $\alpha(-d_i)\le h_i-h_{\min}$, which gives the height bound.
Since the same $\alpha$ multiplies every cell,
$\sum_i h_i^+=\sum_i h_i+\alpha\sum_i d_i=\sum_i h_i$.
If $\mathbf d=\mathbf0$, then $\alpha=1$ and the state is unchanged. A cell at the floor
with a negative increment instead gives $\alpha=0$.
\end{proof}

Centering the directly predicted height increment gives $\mathbf d$.
The denominator floor can shorten the permitted step for very small
negative increments, but leaves both guarantees intact. The common scale
is essential to this construction: cellwise clipping would generally alter
the zero-sum increment and hence its mass balance.

\begin{proposition}[CNS conservation and admissible segment]
\label{prop:cns}
At one cell, let $\mathbf m=(\rho u_x,\rho u_y)^\top\in\R^2$ denote
momentum, with components $m_x,m_y$, and $\mathbf c=(\rho,m_x,m_y,E)^\top\in\R^4$ the conservative state. For $\gamma>1$ and positive floors
$\rho_{\min},p_{\min}$, let $p(\mathbf c)$ be the pressure recovered
from the conservative state and $\mathcal K$ its admissible set:
\begin{equation}
 \mathcal K=\left\{\mathbf c\in\R^4:
 \rho>\rho_{\min},\
 p(\mathbf c)=(\gamma-1)\left(E-\frac{\|\mathbf m\|_2^2}{2\rho}\right)>p_{\min}\right\}.
 \label{eq:cns_admissible}
\end{equation}
Let $\mathbf c_i$, $\mathbf c_i^\star$, and $\mathbf c_i^+$ be the
column vectors corresponding to row $i$ of $\mathbf Q$, $\mathbf Q^\star$,
and $\mathbf Q^+$, respectively. Suppose every $\mathbf c_i$ lies in
$\mathcal K$, and a finite candidate satisfies
$\sum_i\mathbf c_i^\star=\sum_i\mathbf c_i$.
The implemented segment search returns $\mathbf Q^+=\mathbf Q+\alpha_{\mathrm{CNS}}(\mathbf Q^\star-\mathbf Q)$ with
every $\mathbf c_i^+\in\mathcal K$ and $\sum_i \mathbf c_i^+=\sum_i \mathbf c_i$.
If $\mathbf Q^\star$ is admissible, $\alpha_{\mathrm{CNS}}=1$. Otherwise,
let $t\in[0,1]$ denote a segment fraction and
$\alpha_{\mathrm{CNS},\star}$ the supremum of admissible fractions:
\[
 \alpha_{\mathrm{CNS},\star}=\sup\{t\in[0,1]: \mathbf c_i+t(\mathbf c_i^\star-\mathbf c_i)\in\mathcal K
                    \text{ for every }i\}.
\]
For a prescribed bisection count $B_{\mathrm{bis}}$, the returned lower
endpoint satisfies
$0\le\alpha_{\mathrm{CNS},\star}-\alpha_{\mathrm{CNS}}\le2^{-B_{\mathrm{bis}}}$.
\end{proposition}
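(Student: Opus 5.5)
Conservation and admissibility will be handled separately, since conservation is purely linear. For conservation, the per-cell affine form gives $\sum_i\mathbf c_i^+=\sum_i\mathbf c_i+\alpha_{\mathrm{CNS}}\bigl(\sum_i\mathbf c_i^\star-\sum_i\mathbf c_i\bigr)$. Because the candidate has restored means, the bracket vanishes, and this holds for any scalar $\alpha_{\mathrm{CNS}}$ shared by all cells. What remains is to show that the returned $\alpha_{\mathrm{CNS}}$ makes every $\mathbf c_i^+$ admissible and lies within $2^{-B_{\mathrm{bis}}}$ of $\alpha_{\mathrm{CNS},\star}$.

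The key structural step is to show that $\mathcal K$ is convex and open. Density positivity above $\rho_{\min}$ is a linear constraint. For the pressure constraint, on $\{\rho>0\}$ the map $(\rho,\mathbf m)\mapsto\|\mathbf m\|_2^2/(2\rho)$ is a perspective of a convex quadratic and hence jointly convex. It follows that $p(\mathbf c)$ is concave there, and its strict superlevel set is convex. Openness follows from continuity of $p$ on $\{\rho>\rho_{\min}\}$.

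With convexity established, for each cell the set $S_i=\{t\in[0,1]:\mathbf c_i+t(\mathbf c_i^\star-\mathbf c_i)\in\mathcal K\}$ is an interval that contains $0$, because $\mathbf c_i\in\mathcal K$. It is relatively open in $[0,1]$, and the candidate is finite. The intersection $S=\bigcap_i S_i$ is therefore an interval $[0,\alpha_{\mathrm{CNS},\star})$ or $[0,1]$. By openness and $0\in S$, we also get $\alpha_{\mathrm{CNS},\star}>0$. If $\mathbf Q^\star$ is admissible, then $1\in S$ and the program's Boolean test returns $\alpha_{\mathrm{CNS}}=1$, which is admissible.

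Otherwise, I would formalize the bisection with the invariant lo $\in S$ and hi $\notin S$, starting from lo $=0$ and hi $=1$. At each step the midpoint is tested for membership in $S$, and lo or hi is replaced accordingly. Because $S$ is an interval starting at $0$, the invariant gives lo $\le\alpha_{\mathrm{CNS},\star}\le$ hi. After $B_{\mathrm{bis}}$ halvings, hi $-$ lo $=2^{-B_{\mathrm{bis}}}$, which yields $0\le\alpha_{\mathrm{CNS},\star}-\alpha_{\mathrm{CNS}}\le2^{-B_{\mathrm{bis}}}$. Returning lo also guarantees every $\mathbf c_i^+\in\mathcal K$.

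The main obstacle is the convexity and interval argument, specifically justifying that the pressure function is concave. Without it, the bisection bracket could fail, because admissibility would not be monotone along the segment. I would verify concavity explicitly through the perspective-function argument restricted to $\rho>\rho_{\min}>0$. I would also note that the test in the implementation must be the same strict inequalities as in $\mathcal K$.
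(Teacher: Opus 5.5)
Your proposal is correct and follows essentially the same route as the paper's proof. The steps match: conservation comes from the single scalar coefficient, $\mathcal K$ is convex because $\|\mathbf m\|_2^2/\rho$ is jointly convex, the cellwise segments intersect in an interval containing zero, and bisection keeps $\alpha_{\mathrm{CNS},\star}$ bracketed to width $2^{-B_{\mathrm{bis}}}$. For the convexity step you use the perspective-function argument where the paper uses weighted Cauchy--Schwarz, but these are the same fact, and your openness remark giving $\alpha_{\mathrm{CNS},\star}>0$ is a harmless refinement the stated bound does not need.
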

\begin{proof}
For two states with densities $\rho_0,\rho_1>0$ and momenta
$\mathbf m_0,\mathbf m_1$, and a fraction $t\in[0,1]$, weighted
Cauchy--Schwarz gives
\[
 \frac{\|(1-t)\mathbf m_0+t\mathbf m_1\|_2^2}{(1-t)\rho_0+t\rho_1}
 \le (1-t)\frac{\|\mathbf m_0\|_2^2}{\rho_0}
       +t\frac{\|\mathbf m_1\|_2^2}{\rho_1}.
\]
Thus $p(\mathbf c)$ is concave where $\rho>0$, and the two strict inequalities
in \cref{eq:cns_admissible} define a convex set. Intersecting the
cellwise admissible segments gives an interval containing zero.
In the rejected-candidate branch, initialize lower and upper endpoints
at $0$ and $1$. At every bisection, an accepted midpoint replaces the
admissible lower endpoint; a rejected midpoint replaces the upper endpoint.
Convexity keeps $\alpha_{\mathrm{CNS},\star}$ bracketed, and after $B_{\mathrm{bis}}$ iterations the
bracket width is $2^{-B_{\mathrm{bis}}}$. The returned lower endpoint is admissible,
including when it remains zero. For either branch,
\[
 \sum_i \mathbf c_i^+=\sum_i \mathbf c_i+\alpha_{\mathrm{CNS}}
                   \left(\sum_i \mathbf c_i^\star-\sum_i \mathbf c_i\right)=\sum_i \mathbf c_i .
\]
\end{proof}

Mean restoration constructs the conservative candidate required by the
proposition. One segment coefficient then acts on all cells and all four
conservative components, respecting the strict floors
$\rho_{\min},p_{\min}$ for a given specific-heat ratio $\gamma>1$. Reapplying either program from its admissible
output preserves the same constraints in exact arithmetic.
The convex admissible set is also used in positivity-preserving
finite-volume and discontinuous Galerkin schemes \citep{zhang2010positivity}.
Here the segment acts on a whole-field candidate and preserves global totals.

\subsection{Optimization and evaluation}
\label{app:optimization}

\begin{table}[H]
\centering\small
\setlength{\tabcolsep}{5pt}
\caption{Main evaluation populations and fine-tuning schedule for the
pre-training tasks at both model scales. Train is the trajectory count used per epoch;
test is the fixed evaluated row count. $C_{\mathrm{pred},r}=|\mathcal O_r|$ counts scored physical
channels, and $L_{\mathrm{roll}}$ counts model-feedback prediction steps after ten input
frames. The learning rate is the maximum during E200 fine-tuning.}
\label{tab:protocol}
\begin{tabular}{lrrrrrr}\toprule
Setting & Native grid & Train & Test & $C_{\mathrm{pred},r}$ & $L_{\mathrm{roll}}$ & Fine-tune LR \\
\midrule
NS $10^{-5}$ & $64^2$ & 1,000 & 200 & 1 & 10 & $10^{-5}$ \\
NS $10^{-3}$ & $64^2$ & 1,000 & 200 & 1 & 20 & $10^{-3}$ \\
CNS & $128^2$ & 9,000 & 200 & 4 & 11 & $10^{-4}$ \\
SWE & $128^2$ & 900 & 60 & 1 & 91 & $10^{-3}$ \\
DR & $128^2$ & 900 & 60 & 2 & 91 & $10^{-3}$ \\
CFDB & $64^2$ & 9,000 & 1,000 & 2 & 10 & $10^{-5}$ \\
\bottomrule\end{tabular}\end{table}

\paragraph{Joint pre-training.}
Both scales train for 1000 epochs, with 545 updates per epoch, using two
devices and local batch 20 (global batch 40). Each trajectory contributes
one ten-frame-to-one-frame training window per epoch. Over a ten-epoch
cycle, sampling visits the initial window once and early, middle, and late
non-initial windows three times each. Batches contain compatible tasks;
the epoch plan retains the benchmark's per-dataset trajectory exposure.
Adam uses $\beta_1=\beta_2=0.9$ and weight decay $10^{-6}$. OneCycleLR
reaches $10^{-3}$ after 20\% of training, with initial and final division
factors both $10^4$. NS (on its native grid) and DR receive norm-scaled
Gaussian input noise with factor $5\times10^{-4}$; CNS, SWE, CFDB, and all
targets remain clean. The loss is the one-step
physical-channel relative error in \cref{eq:training_objective}.
The 9000 CFDB training rows comprise 581 cavity, 8231 cylinder, and 188 tube
examples; all three geometry paths therefore participate in pre-training.

\paragraph{Fine-tuning on the pre-training tasks.}
Starting from the fixed E1000 checkpoint, the shared encoder and selected
system-specific predictor train for 200 complete epochs; other predictors
remain frozen.
Both scales use global batch 40, the same Adam settings and 20\% warmup,
and target-only phase-aware sampling. Maximum learning rates are specified per task in \cref{tab:protocol}; input noise is disabled. Fine-tuning retains full
per-epoch target exposure, giving 5,000 updates for each NS setting,
45,000 for CNS and CFDB, and 4,600 for SWE and DR. Learning rates are fixed from the T-scale held-out E200 rollout-validation
diagnostics and reused at both scales. Final refits use the complete
original training populations and the fixed E200 checkpoint.

\paragraph{Rollout metric and population.}
After the first ten frames, every predicted frame is fed back to the model.
For each trajectory and physical channel, the released relative-$L_2$
metric flattens the spatial grid and the complete reported prediction
window together. Channel errors are then averaged over scored channels,
and trajectory errors equally over evaluated rows; the denominator includes
$10^{-8}$. Masks exclude padded cells and unscored channels. Thus an H10
entry measures the whole ten-frame prediction window. CFDB scores velocity
channels, with geometry/support channels used only as inputs and state
assembly. The fixed main population uses the first 1000 rows (zero-based
indices $0$--$999$) of the 1182-row packed CFDB test file, comprising 110
cavity and 890 cylinder rows. It is
row-weighted; the separate condition study uses case-uniform aggregation.
Main and ablation results use fixed final checkpoints. Reported runs use
one training seed; few-shot and wave-transfer runs use seed 0.

\paragraph{Few-shot selection protocol.}
NS at $\nu=10^{-4}$ supplies 9800 target-training trajectories, and CNS
at $M=1$ supplies 9000. The last 256 trajectories of each training source
form a fixed validation set. The reported training subsets contain 16,
64, or 256 trajectories sampled from the remaining rows.
Paired methods use identical training and validation indices.
Training runs for 500 epochs with batch $\min(20,\text{shots})$, Adam
with $\beta_1=\beta_2=0.9$, weight decay $10^{-6}$, and OneCycleLR
with 20\% warmup and division factors $10^4$.
NS HotPOT uses maximum learning rates $10^{-4}$, $10^{-4}$, and $10^{-5}$
at the three budgets, respectively, selected from the evaluated
$\{10^{-4},10^{-5}\}$ candidates by rollout validation. MoE-POT uses $10^{-4}$ on both targets; CNS HotPOT also uses $10^{-4}$.
Each training trajectory supplies one phase-aware ten-frame-to-next-frame
window per epoch. Every ten epochs, validation evaluates autoregressive
rollout on all 256 validation trajectories over the target test horizon.
The checkpoint with lowest finite validation L2RE is used for testing.
Both models update their shared and selected task parameters during
adaptation. MoE-POT starts from its local E1000 pre-training checkpoint.

\subsubsection{Few-shot rollout evaluation}
\label{app:fewshot_evaluation}

Each target--budget pair uses its validation-selected checkpoint, including
normalization statistics. Evaluation starts from ten
input frames and uses model feedback for 20 steps on NS and 11 on CNS,
over 200 test trajectories per target. Masked per-channel space--time
L2RE is averaged equally over trajectories.

\subsection{Parameter storage and numerical query counts}
\label{app:efficiency}

\Cref{tab:parameters} gives exact parameter counts at both scales;
\cref{tab:resource_comparison} compares activated and stored parameters across models.

\begin{table}[H]
\centering\small
\caption{Exact parameter accounting at both scales. Storage includes all
system-specific expert paths; an active path includes the shared encoder and the selected
system or CFDB geometry path.}
\label{tab:parameters}
\begin{tabular}{lrr}\toprule
Quantity & HotPOT-T & HotPOT-S \\
\midrule
All-PDE shared history encoder & 2,011,648 & 11,901,952 \\
Active NS/SWE path & 16,770,577 & 89,584,673 \\
Active DR path & 16,903,188 & 90,112,036 \\
Active CFDB geometry path & 16,905,234 & 90,116,130 \\
Active CNS path & 17,178,004 & 91,185,956 \\
Stored total & 97,902,720 & 518,480,368 \\
\bottomrule\end{tabular}\end{table}

\begin{table}[H]
\centering\small
\caption{Activated and stored parameters, in millions. MoE-POT counts are
the published nominal values \citep[Table~5]{wang2025moepot}; HotPOT ranges
cover its active routes.}
\label{tab:resource_comparison}
\begin{tabular}{lrr}\toprule
Model & Activated & Stored total \\
\midrule
MoE-POT-T & 17 & 30 \\
HotPOT-T & 16.77--17.18 & 97.90 \\
MoE-POT-S & 90 & 166 \\
HotPOT-S & 89.58--91.19 & 518.48 \\
MoE-POT-M & 288 & 489 \\
\bottomrule\end{tabular}
\end{table}

Repeated use of the same predictor counts its weights once in the activated
parameter total and contributes to runtime according to the number of calls.

Each setting updates the shared encoder and its selected family-specific
predictor. The two NS viscosity settings use the same NS predictor.
For CFDB, the encoder and geometry-shared layers train together with the
selected geometry's pointwise FFNs and velocity head. The family selector
is deterministic and has no learned router.

\paragraph{Numerical query counts.}
The history encoder, system-specific context blocks, and spatial decoder
run once per saved interval. NS uses two readout calls per numerical
substep; DR uses two per saved interval, and CNS, SWE, and each CFDB
geometry path use one. Repeated calls reuse the same readout weights,
so they increase computation without increasing the
number of activated parameters. The present configuration has no matched
wall-clock benchmark; parameter counts and query counts describe different
aspects of its computational cost.

\subsection{Additional ablation details and analyses}
\label{app:transfer}

\paragraph{History sharing and placement.}
Full uses one learned history encoder across PDE families, followed by
private coupling and evolution. Learned private history copies the same
encoder architecture into a separately trainable encoder for each family.
The two NS settings retain the same family assignment. Predictor
architectures, numerical updates, task exposure, and active-path dimensions
are retained; storage increases because the history weights are no longer
shared. Late-shared evolution uses field-history tokenization before private
coupling and moves the shared processing to the evolution stage.
Late-private evolution uses the corresponding placement with family-private weights.
\Cref{tab:sharing_controls} summarizes the four combinations.

\begin{table}[!htb]
\centering\small
\setlength{\tabcolsep}{3.5pt}
\caption{History/evolution placement and sharing controls. The mean equally weights six settings; all results use fixed E1000 checkpoints.}
\label{tab:sharing_controls}
\sisetup{detect-weight=true,mode=text,group-digits=false,table-number-alignment=center}
\begin{tabular}{lll S[table-format=3.2] S[table-format=1.4]}\toprule
Model & Placement & Weights & {Stored (M)} & {Mean L2RE} \\
\midrule
Full & History & Shared & 97.90 & 0.0101 \\
Learned private history & History & Private & 105.95 & 0.0104 \\
Late-shared evolution & Evolution & Shared & 97.90 & 0.0286 \\
Late-private evolution & Evolution & Private & 105.95 & 0.0287 \\
\bottomrule\end{tabular}
\end{table}

\paragraph{Direct-state and iterative controls.}
\label{app:direct_matching}
Direct abbreviates the Direct-state predictor control. Full and Direct
retain the same encoder, system-specific mixers, context
decoders, and readout-head architectures, with 97,902,720 stored parameters
and identical activated counts. Their history features, statistics, task
inputs, and masks match. Direct replaces the prediction--update interface
with residual state prediction, retaining nominal readout-call counts.
It therefore changes prediction quantities, numerical execution, and
stage-state queries together.
Generic iteration uses the same cached history context and shared readout
weights to refine the NS state through state-dependent residual queries.
Its intervention tests the NS numerical update against a learned iterative
alternative. Other paths retain their configured update structures.
All rows are trained under the same E1000 multi-PDE protocol.

\paragraph{Space--time visualization of two-dimensional tasks.}
\label{app:sharing_spacetime}
The SWE and DR tasks have two spatial dimensions, $x$ and $y$.
The two DR concentration fields are denoted by $u$ and $v$.
\Cref{fig:sharing_spacetime} stacks their two-dimensional error fields
over forecast steps to form $(x,y,n)$ space--time volumes, where $n$
indexes the forecast step. The third rendering axis therefore represents
time, not a third spatial dimension.
We compare HotPOT with Late-shared evolution using fixed example 0 and all
91 forecast steps. Two complementary two-dimensional cuts accompany
each volume pair: an $x$--$y$ map at step 91 locates the final spatial
error differences, and an $x$--time map at zero-based grid index $y=64$ shows how the local
differences evolve. These cuts expose local differences without
occlusion by the volume rendering.

\begin{figure}[!htbp]
\centering
\includegraphics[width=\textwidth]{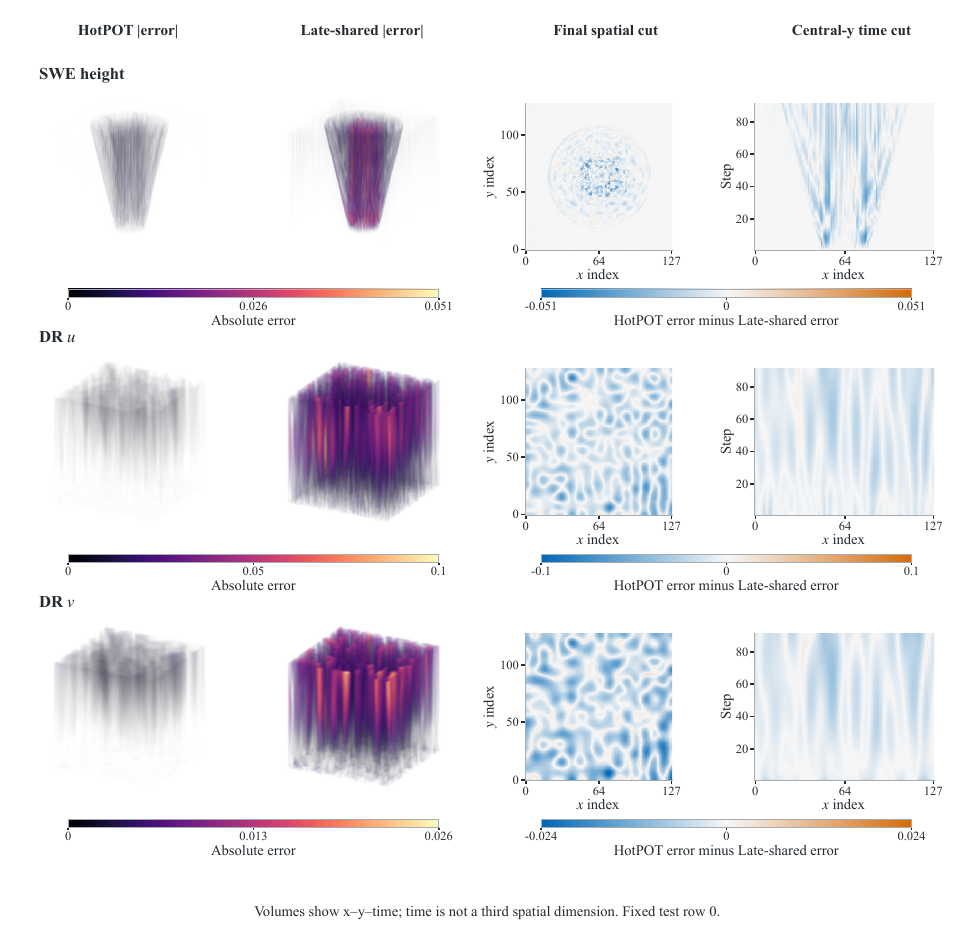}
\caption{\textbf{Space--time errors and two-dimensional cuts for the sharing-location comparison.}
Rows show SWE height and both DR fields for fixed example 0.
The first two columns render absolute errors in two spatial dimensions
and forecast time. The last two columns show HotPOT absolute error minus
Late-shared evolution absolute error: an $x$--$y$ slice at step 91 and an
$x$--time slice at zero-based $y=64$.
Blue denotes lower error for HotPOT; orange denotes lower error for Late-shared evolution.
Within each field, the error volumes share the camera, full-range color
mapping, and opacity mapping; opacity increases with absolute error.
The two cuts share a symmetric range determined by the full space--time
error difference. Volume colorbars describe scalar values before
compositing, whereas the cuts display local scalar differences directly.}
\label{fig:sharing_spacetime}
\end{figure}

In these illustrated trajectories, the DR spatial cuts contain extended
regions of lower HotPOT error, and the time cuts show differences persisting
across many forecast steps. For SWE, the time cut localizes differences
along the moving wave pattern. These local views complement the six-setting
aggregate comparison in \cref{tab:ablation}.

\paragraph{Full-population temporal errors.}
\Cref{fig:all_test_rollout} aggregates every trajectory in the six
main-test populations. It connects the final aggregate scores in
\cref{tab:ablation,tab:numerical_ablation} to the evolution of prediction error through time.
The per-step metric differs from full-horizon L2RE, which forms one norm
over the complete space--time window.

\begin{figure}[!htbp]
\centering
\includegraphics[width=\textwidth]{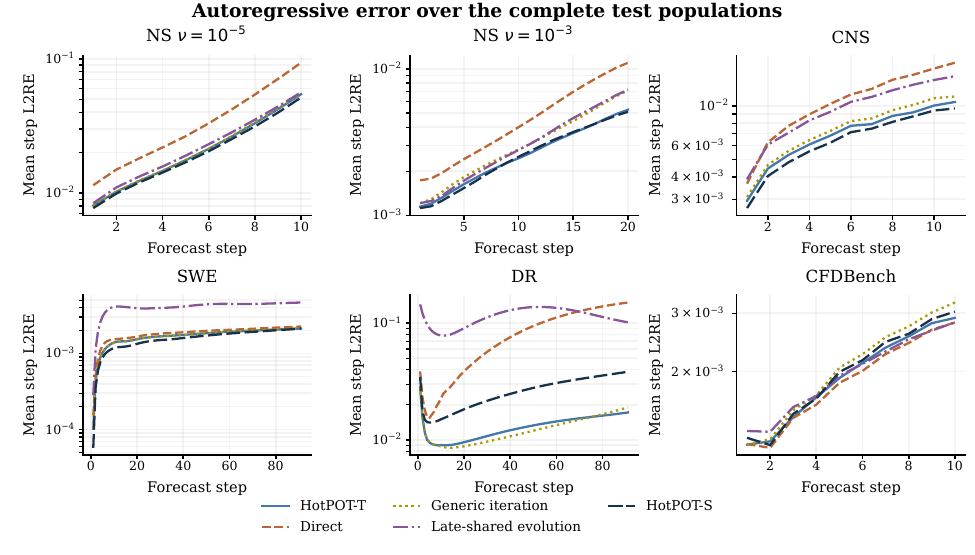}
\caption{\textbf{Mean per-step errors across the complete test populations.}
Each curve averages per-trajectory, scored-channel relative $L_2$ error
at each forecast step. All models use fixed E1000 checkpoints and their
own autoregressive feedback. CFDB uses the same row-weighted population
as the main table. Vertical axes are logarithmic.}
\label{fig:all_test_rollout}
\end{figure}

\Cref{fig:ns_generic_slices} shows where Full and Generic iteration differ
on four matched NS trajectories.

\begin{figure}[!htbp]
\centering
\includegraphics[width=\textwidth]{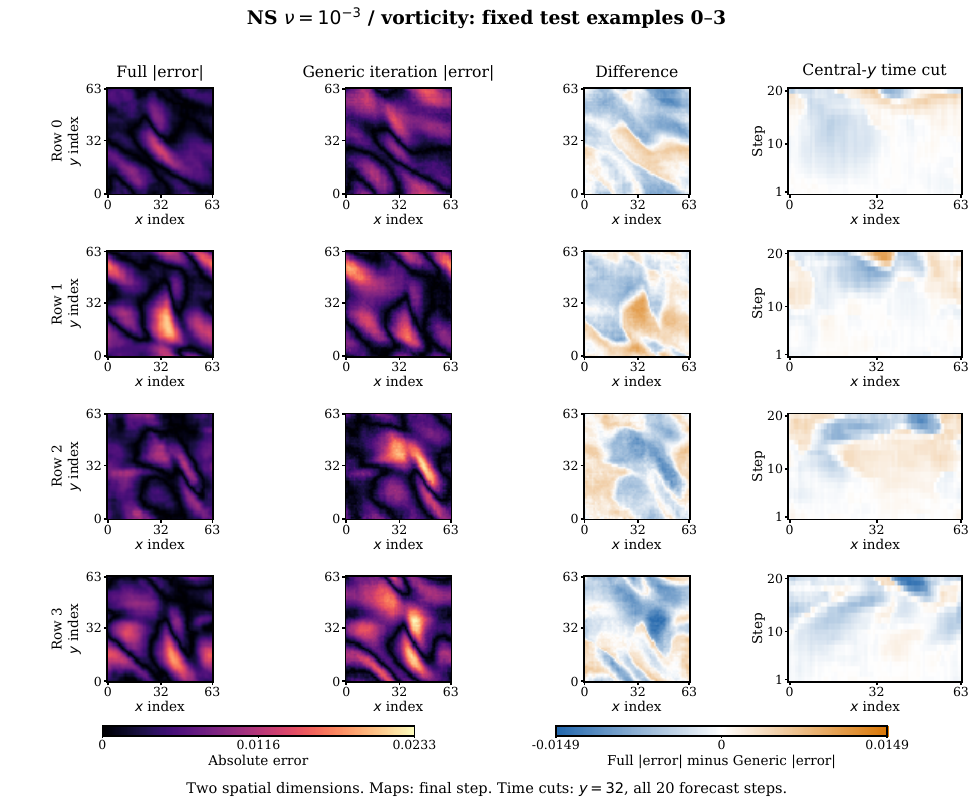}
\caption{\textbf{NS numerical execution and generic iteration.}
The first four test examples at $\nu=10^{-3}$ use matched reference
trajectories. The first two columns show final-step absolute errors;
the third shows their difference. The fourth is an $x$--time cut at the
central $y$ index through the difference volume. Blue favors Full, orange
favors Generic iteration. Error scales are shared across all four examples;
difference scales are symmetric and use the complete forecast volumes.
The task has two spatial dimensions; the rightmost vertical axis is time.}
\label{fig:ns_generic_slices}
\end{figure}

\subsection{Few-shot transfer to the wave family}
\label{app:wave}

\paragraph{Data and initialization.}
Wave-Layer and Wave-Gauss are two datasets from one wave-equation family
in the Poseidon benchmark \citep{herde2024poseidon}. Neither belongs to either source
checkpoint's six-dataset pre-training mixture. For each target, we use a
fixed custom split of mutually disjoint original trajectories: 1,800 in the
training pool, 100 for validation, and 200 for testing. Each original trajectory
belongs to exactly one split; sampled temporal windows remain within that split.
All methods use these same splits.
The two targets are evaluated
separately at $128\times128$ resolution with ten input frames and 11/6
forecast frames for Layer/Gauss. Nested subsets contain 16, 32, 64, or
128 training trajectories.

HotPOT transfer initializes the 2.012M-parameter field-wise history encoder
from the
17M E1000 source checkpoint. A scalar system-specific predictor is randomly
initialized, and all 16.771M target-model parameters are trained.
The scratch control has the same architecture and identical initial
predictor weights, differing only in encoder initialization. Both use a
common residual-prediction architecture that adds the predicted increment
to the current displacement.
Each model retains its native input normalization.
MoE-POT loads its full local 1000-epoch pre-training checkpoint, updates all its trainable parameters, including the
feature extractor and gating parameters.

\paragraph{Training and testing.}
All methods use the same target splits, sampled windows, training budget,
optimizer, learning-rate schedule, and validation-selection rule.
All runs use 500 epochs, batch $\min(20,\mathrm{shots})$, Adam
with $\beta_1=\beta_2=0.9$ and weight decay $10^{-6}$, and OneCycleLR
with maximum $10^{-4}$, 20\% warmup, and both division factors $10^4$.
Each epoch samples one ten-frame-to-next-frame window per trajectory;
the sampling indices and order are matched across arms. Validation
evaluates the full rollout every ten epochs. The lowest validation error
selects the checkpoint, with the earliest checkpoint breaking ties.
The selected model is evaluated on the 200 held-out test trajectories.

The metric is the trajectory-uniform mean of per-step spatial L2RE, with
denominator floor $10^{-8}$ and each model's own autoregressive feedback.

\begin{table}[!htb]
\centering\small
\setlength{\tabcolsep}{3.5pt}
\caption{Transfer to one unseen wave-equation family. Mean per-step L2RE on 200 custom held-out trajectories, averaged over 11/6 forecast frames for Layer/Gauss. All methods update their complete target models and select checkpoints by full-rollout validation within 500 epochs.}
\label{tab:wave_transfer}
\sisetup{detect-weight=true,mode=text,group-digits=false,table-number-alignment=center}
\begin{tabular}{ll S[table-format=1.3] S[table-format=1.3] S[table-format=1.3] S[table-format=1.3]}\toprule
Target & Model & {16} & {32} & {64} & {128} \\
\midrule
Wave-Layer & MoE-POT full fine-tuning & 0.576 & 0.517 & 0.435 & 0.365 \\
 & HotPOT scratch & 0.553 & 0.428 & 0.331 & 0.235 \\
 & HotPOT encoder transfer & \bfseries 0.506 & \bfseries 0.395 & \bfseries 0.304 & \bfseries 0.220 \\
\midrule
Wave-Gauss & MoE-POT full fine-tuning & 0.514 & 0.452 & 0.394 & 0.325 \\
 & HotPOT scratch & 0.542 & 0.372 & 0.284 & 0.225 \\
 & HotPOT encoder transfer & \bfseries 0.453 & \bfseries 0.370 & \bfseries 0.279 & \bfseries 0.218 \\
\bottomrule\end{tabular}
\end{table}

\paragraph{Results.}
Encoder transfer lowers error relative to MoE-POT full fine-tuning by
12.1--39.8\% on Wave-Layer and 11.9--32.9\% on Wave-Gauss.
It also improves on scratch training in all eight target--budget pairs,
with reductions of 6.5--8.4\% on Layer and 0.6--16.4\% on Gauss.
The comparison with scratch measures the effect of pre-trained encoder
initialization under this matched target architecture and training protocol.

\subsection{Physical diagnostics}
\label{app:physics}

For NS, relative enstrophy error compares the spatial mean squared
vorticity with the reference at each forecast step and divides by the
reference value, floored at $10^{-8}$. For SWE and CNS, conservation drift
measures the absolute change of a predicted spatial mean from the last
observed state. SWE normalizes by the initial absolute mean height; CNS
normalizes each conservative component by its initial spatial mean absolute
value. Metrics average all forecast steps and then all test trajectories.

\begin{table}[H]
\centering\small
\setlength{\tabcolsep}{5pt}
\caption{Physical diagnostics over the complete test rollouts of fixed E1000 Full and Direct models. Values average all trajectories and forecast steps; lower is better.}
\label{tab:physics}
\sisetup{detect-weight=true,mode=text,group-digits=false,table-number-alignment=center}
\begin{tabular}{ll S[table-format=1.2e-1] S[table-format=1.2e-1]}\toprule
System & Diagnostic & {Full} & {Direct} \\
\midrule
NS $10^{-5}$ & Relative enstrophy error & \bfseries 1.78e-3 & 3.63e-3 \\
NS $10^{-3}$ & Relative enstrophy error & \bfseries 2.16e-3 & 2.51e-3 \\
CNS & Normalized mass drift & \bfseries 7.63e-8 & 2.04e-4 \\
 & Normalized energy drift & \bfseries 8.15e-8 & 2.06e-4 \\
SWE & Relative mass drift & \bfseries 1.19e-9 & 9.31e-5 \\
\bottomrule
\end{tabular}
\end{table}

Full has lower mean enstrophy error on both NS settings and lower
conservation drift on SWE and CNS than Direct in \cref{tab:physics}.
Propositions~\ref{prop:swe} and~\ref{prop:cns} establish constraint preservation
in exact arithmetic; the table measures the implemented floating-point
rollouts. The CFDB program copies auxiliary support channels exactly.

\paragraph{Observed numerical execution.}
\label{app:execution_observations}
To distinguish mean correction from admissibility limiting, we observe the
actual E1000 Full program on all 60 SWE trajectories over 91 steps and all
200 CNS trajectories over 11 steps. At each step, we record the uncentered
neural proposal, the mean-corrected candidate, and the executed state along
the model's own feedback trajectory.

\begin{figure}[!htbp]
\centering
\includegraphics[width=\textwidth]{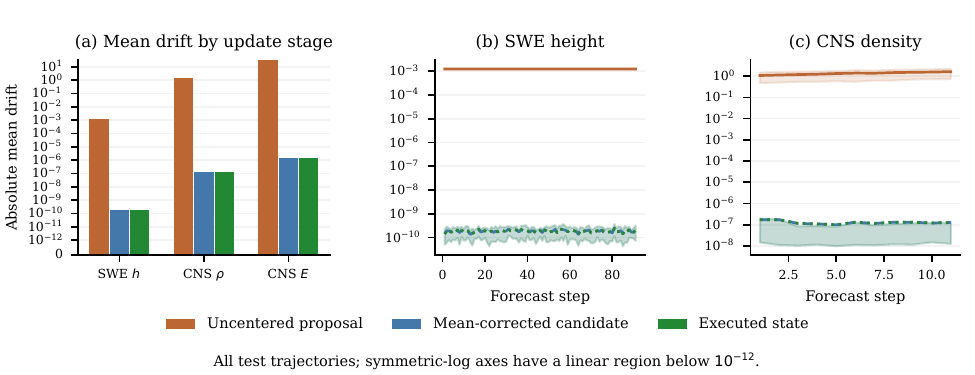}
\caption{\textbf{Numerical execution along the fixed E1000 HotPOT rollouts.}
(a) Mean absolute spatial-mean drift at each program stage, averaged over
all recorded trajectory--step pairs for SWE height, CNS density, and CNS energy. (b,c) Per-step drift for SWE height
and CNS density. Drift is the absolute change in spatial mean from the
current state to the candidate next state, in released field units.
Lines show arithmetic means and bands the 25th--75th percentiles across
trajectories. All stages use the same current state on the model's own
feedback trajectory.}
\label{fig:numerical_execution}
\end{figure}

\Cref{fig:numerical_execution} shows the drift before and after mean
correction. The limiter coefficient equals one at all 7,660 evaluated
updates, so each mean-corrected candidate is executed unchanged.
The minimum executed SWE height is 0.2519; the minimum CNS density
and pressure are 0.05715 and 0.3181, respectively.

\paragraph{SWE reference compatibility.}
\label{app:swe_reference}
Across all 60 reference trajectories and their 91 forecast frames,
float64 spatial reductions give the following compatibility measures.
Relative to the last observed
frame, the maximum absolute change in mean height is $1.20\times10^{-9}$.
After shifting each forecast reference to that initial mean, both the initial
and mean-shifted reference fields have minimum height above $0.2559$,
well above the $10^{-6}$ floor. The shifted references are compatible with
the mass-preserving update in Appendix~\ref{app:theory}. Let $\mathbf h^{\mathrm{ref},n}\in\R^N$ be the forecast reference
field, $\bar h^{\mathrm{ref},n}=\langle\mathbf h^{\mathrm{ref},n}\rangle$
its spatial mean, and $\bar h^{\mathrm{obs}}$ the mean of the last observed
input frame. The relative $L_2$ discrepancy associated with the initial mean,
$\sqrt N|\bar h^{\mathrm{ref},n}-\bar h^{\mathrm{obs}}|/
\|\mathbf h^{\mathrm{ref},n}\|_2$, averages
$2.27\times10^{-10}$ over trajectories and forecast steps.

\subsection{Paired field visualizations}
\label{app:qualitative}

\Cref{fig:qualitative_fields_a,fig:qualitative_fields_b} compare reference
fields, model predictions, and absolute errors across six settings,
including separate cavity and cylinder examples. Labels $u_x,u_y$ denote
velocity components in CNS and CFDB. All scored fields use
shared color scales within each row. Both 17M-active models use fixed E1000
checkpoints, the same ten-frame history, and autoregressive feedback.

\begin{figure}[H]
\centering
\includegraphics[width=\textwidth]{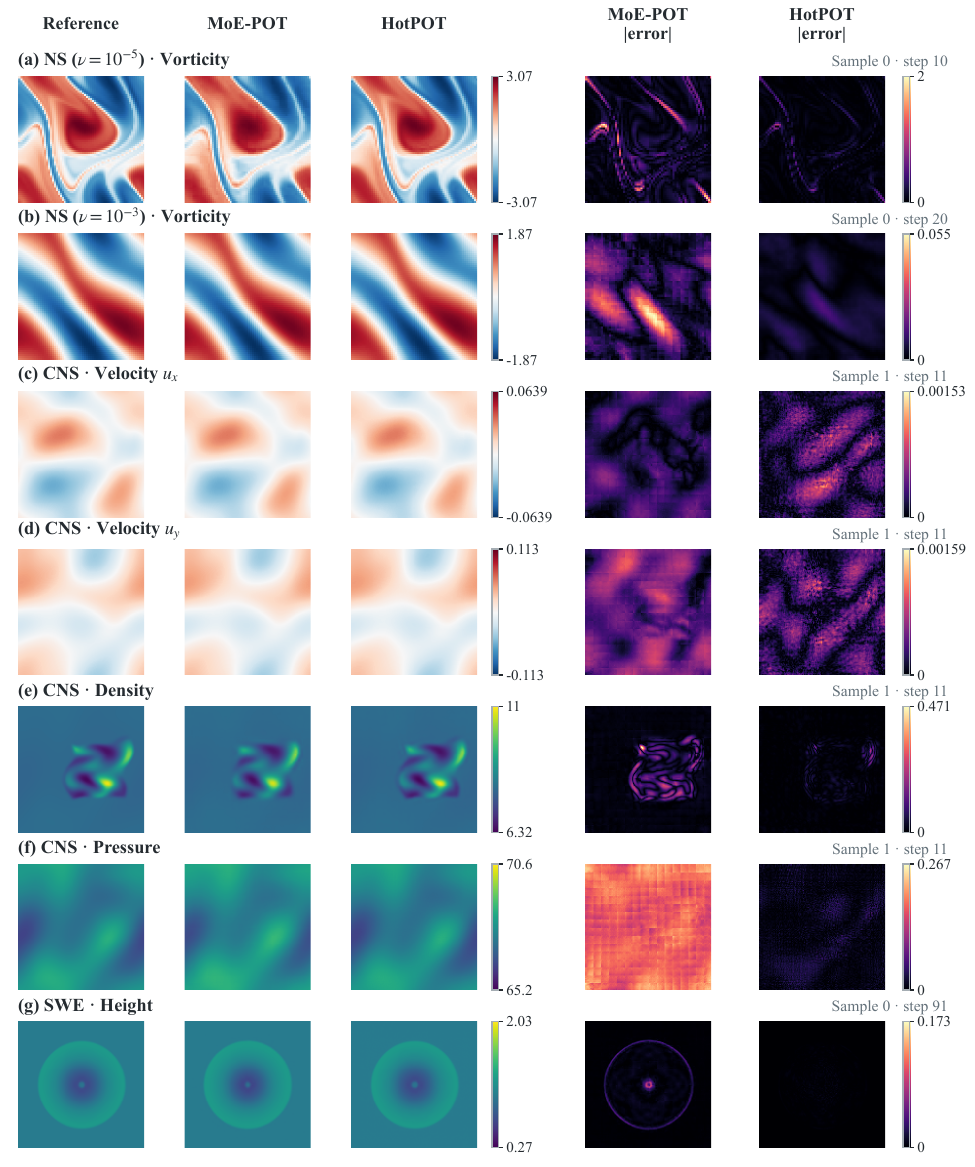}
\caption{\textbf{Predictions and absolute errors: NS, CNS,
and SWE.} All scored
fields are shown at the final forecast step (NS/SWE: row 0; CNS: row 1).
Reference and prediction fields share color limits within each row; error
maps share a zero lower bound and the full-rollout maximum. Signed fields
use zero-centered scales.}
\label{fig:qualitative_fields_a}
\end{figure}

\begin{figure}[H]
\centering
\includegraphics[width=\textwidth]{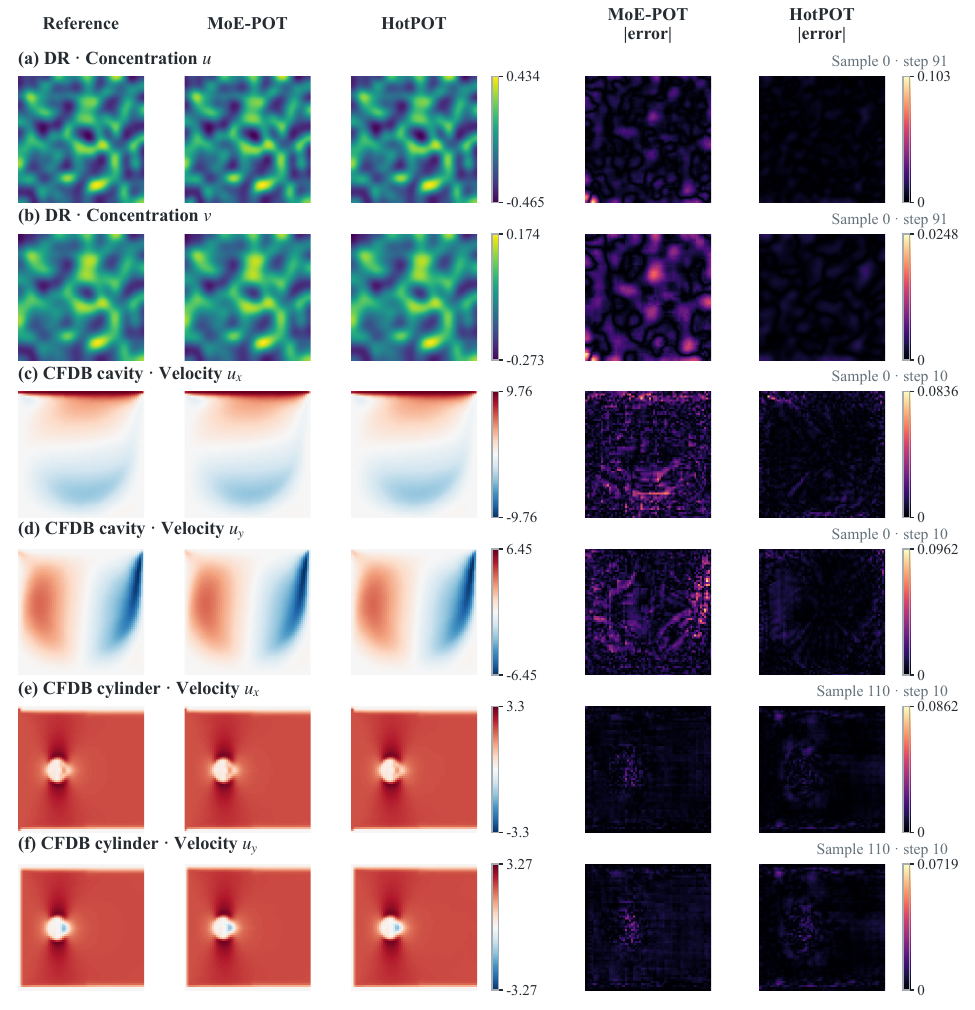}
\caption{\textbf{Predictions and absolute errors: DR and
CFDB.} All scored fields appear at the last
forecast step (DR/cavity: row 0; cylinder: row 110).
Within each row, reference and prediction fields share color limits; error
maps share a zero lower bound and the full-rollout maximum.}
\label{fig:qualitative_fields_b}
\end{figure}

\paragraph{Temporal error profiles.}
\Cref{fig:rollout_error_curves} follows the same seven examples through
every forecast step, complementing the final-frame maps with their temporal
error profiles.

\subsection{Space--time error structure}
\label{app:spacetime}

We examine the SWE and DR trajectories from Appendix~\ref{app:qualitative},
retaining all 91 forecast steps and both DR fields. Fixing zero-based grid
index $y=64$ gives $x$--time cuts with forecast time on the vertical axis
(\cref{fig:spacetime_slices}). SWE shows outward-moving and returning waves;
DR shows evolving concentration patterns. Color scales are shared within each field.

{\setlength{\intextsep}{6pt}\setlength{\abovecaptionskip}{4pt}
\begin{figure}[H]
\centering
\includegraphics[width=0.92\textwidth]{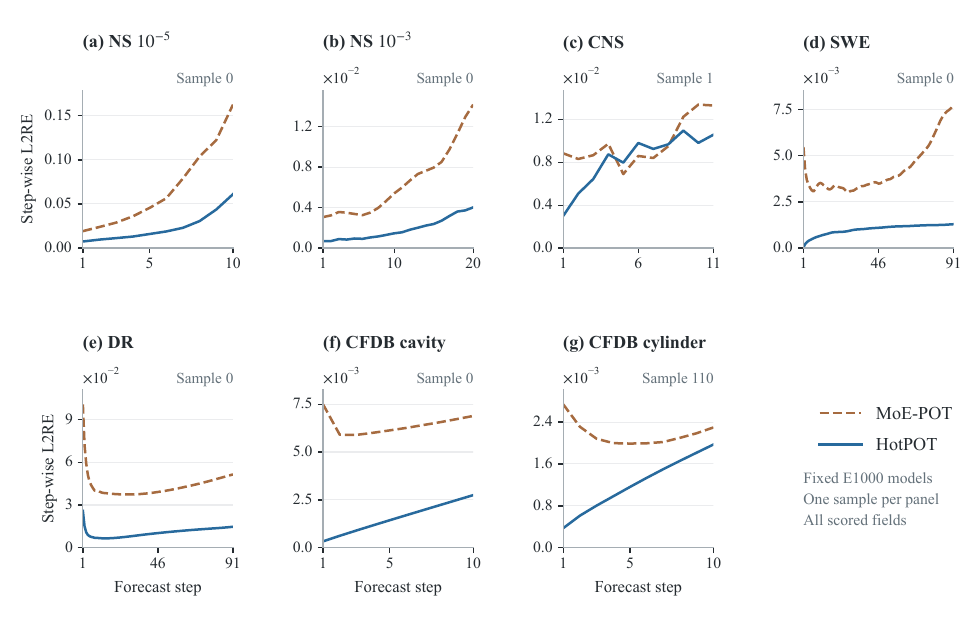}
\caption{\textbf{Temporal errors for the seven illustrated trajectories.}
Both E1000 models use autoregressive feedback from the same ten-frame histories.
Each unsmoothed curve averages spatial L2RE over scored channels.}
\label{fig:rollout_error_curves}
\vspace{6pt}
\includegraphics[width=0.92\textwidth]{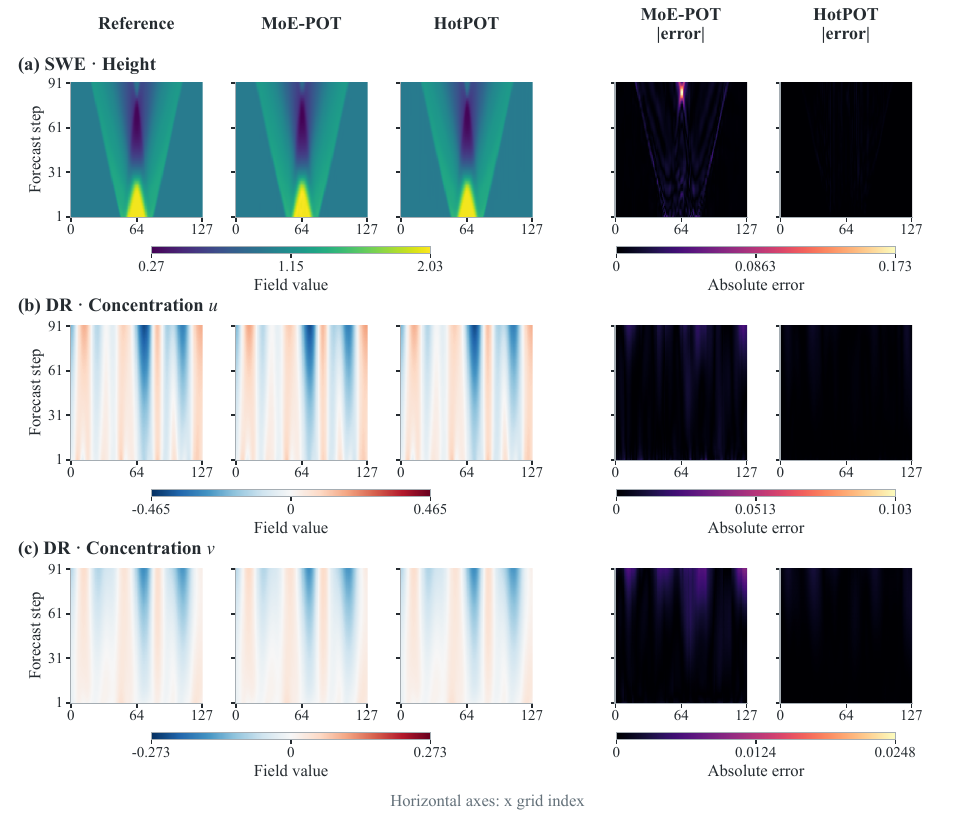}
\caption{\textbf{Space--time slices of SWE height and DR $u,v$ at zero-based $y=64$.}
Axes are $x$ grid index and forecast step. Reference/prediction colors share
full-array extrema within each row; error panels use zero to the full-array
maximum. Both examples use trajectory row 0.}
\label{fig:spacetime_slices}
\end{figure}}

\end{document}